\newtheorem{proposition}{\bf Proposition}
\newcounter{step}
\newlength{\totlinewidth}
\newenvironment{algorithm}{%
  \rule{\linewidth}{1pt}
  \begin{list}{}%
    {\usecounter{step}%
      \settowidth{\labelwidth}{\textbf{Step 2:}}%
      \setlength{\leftmargin}{\labelwidth}%
      \setlength{\topsep}{-2pt}%
      \addtolength{\leftmargin}{\labelsep}%
      \addtolength{\leftmargin}{2mm}%
      \setlength{\rightmargin}{2mm}%
      \setlength{\totlinewidth}{\linewidth}%
      \addtolength{\totlinewidth}{\leftmargin}%
      \addtolength{\totlinewidth}{\rightmargin}%
      \setlength{\parsep}{0mm}%
      \raggedright}}%
  {\end{list}%
  \rule{\linewidth}{1pt}}
\newcounter{substep}
\newlength{\aligntop}
\newlength{\alignbot}
\renewenvironment{align}{%
  \vspace{\aligntop}
  \start@align\@ne\st@rredfalse\m@ne
}{%
  \math@cr \black@\totwidth@
  \egroup
  \ifingather@
    \restorealignstate@
    \egroup
    \nonumber
    \ifnum0=`{\fi\iffalse}\fi
  \else
    $$%
  \fi
  \ignorespacesafterend%
  \vspace{\alignbot}\par\noindent
} \makeatother
\begin{document}
\title{\huge Distributed Multi-agent Meta Learning for Trajectory Design in Wireless Drone Networks}

\author{
\IEEEauthorblockN{Ye Hu,  {Mingzhe Chen}, \emph{Member, IEEE}, Walid Saad, \emph{Fellow, IEEE}, H. Vincent Poor, \emph{Life Fellow, IEEE}, and Shuguang Cui, \emph{Fellow, IEEE}\vspace{0.2cm}
\thanks{{A preliminary version of this work will be presented at the IEEE Global Communications Conference \cite{hu2020meta}.}}}
\thanks{Y. Hu, W. Saad are with the Wireless@VT, Bradley Department of Electrical and Computer Engineering, Virginia Tech, Blacksburg, VA, USA, 24061, Emails: \protect{yeh17@vt.edu}, \protect{walids@vt.edu}.}
\thanks{M. Chen is with the Department of Electrical Engineering, Princeton University, Princeton, NJ, USA, 08544, and the Future Network of Intelligence Institute, Chinese University of Hong Kong, Shenzhen, China, 518172, Email: \protect{mingzhec@princeton.edu}.}
\thanks{H. V. Poor is with the Department of Electrical Engineering, Princeton University, Princeton, NJ, USA, 08544, Emails: \protect{poor@princeton.edu}.}
\thanks{S. Cui is currently with the Shenzhen Research Institute of Big Data and Future Network of Intelligence Institute (FNii), the Chinese University of Hong Kong, Shenzhen, China, 518172, Email:  \protect{e-mail: shuguangcui@cuhk.edu.cn}.}}

  
\vspace{-0.8cm}



\maketitle
\vspace{-1.5cm} 
\begin{abstract}
\boldmath
 {In this paper, the problem of the trajectory design for a group of energy-constrained drones operating in dynamic wireless network environments is studied. In the considered model, a team of drone base stations (DBSs) is dispatched to cooperatively serve clusters of ground users that have dynamic and unpredictable uplink access demands.  In this scenario, the DBSs must cooperatively navigate in the considered area to maximize coverage of the dynamic requests of the ground users. This trajectory design problem is posed as an optimization framework whose goal is to find optimal trajectories that maximize the fraction of users served by all DBSs. 
 To find an optimal solution for this non-convex optimization problem under unpredictable environments, a value decomposition based reinforcement learning (VD-RL) solution coupled with a meta-training mechanism is proposed. This algorithm allows the DBSs to dynamically learn their trajectories while generalizing their learning to unseen environments.
Analytical results show that, the proposed VD-RL algorithm is guaranteed to converge to a local optimal solution of the non-convex optimization problem. Simulation results show that, even without meta-training, the proposed VD-RL algorithm can achieve a $53.2\%$ improvement of the service coverage and a $30.6\%$ improvement terms of the convergence speed, compared to baseline multi-agent algorithms. Meanwhile, the use of meta-learning improves the convergence speed of the VD-RL algorithm by up to $53.8\%$ when the DBSs must deal with a previously unseen task.
}
 
 \end{abstract}

\begin{IEEEkeywords} 
Drones, network optimization, multi-agent reinforcement learning, meta-learning.
\end{IEEEkeywords}


\renewcommand{\thefootnote}{\arabic{footnote}}




%
\IEEEpeerreviewmaketitle
 \vspace{-0.35cm}  

\section{Introduction}

{Aerial wireless communication platforms carried by drones can provide a cost-effective, flexible approach to boost the coverage and capacity of future wireless networks \cite{8755300, 6214709,8533634}. However, effectively deploying a group of drone base stations (DBSs) for providing timely on demand wireless connectivity to ground users in dynamic wireless environments is still an important open problem. In particular, designing trajectories for a group of independent DBSs is challenging particularly when the DBSs only have limited information on the wireless requests of the ground users which are often highly unpredictable and dynamic.

\vspace{-0.35cm}
\subsection{Related Works}
 
The existing literature in \cite{7888557, 8648498, 8247211, 7389838, 9013759, huang2019reinforcement, 8432464, 8654727, 9154432, 8807386, 8727504, foerster2017counterfactual, rashid2018qmix} studied a number of problems related to trajectory design for drone-based wireless networks. 
The work in \cite{7888557} studies the drone trajectory optimization problem by jointly considering both the drone's communication throughput and its energy consumption.
 The authors in \cite{8648498} design the trajectory of a a solar-powered DBS to enhance its wireless communication performance. In \cite{8247211}, the problem of trajectory design and user association in a multi-drone communication system is solved with a block coordinate descent solution. The authors in \cite{7389838} propose a dynamic trajectory control algorithm to improve the communication performance of the DBSs. The authors in \cite{9013759} optimize time allocation, reflection coefficient adjustment, and DBS trajectory in backscatter communication networks.
  Despite their promising results, these existing works \cite{7888557, 8648498, 8247211, 7389838, 9013759} do not consider practical DBS-assisted wireless networks in which the ground user requests for wireless service follow unpredictable patterns. Indeed, the optimization based solutions in \cite{7888557, 8648498, 8247211, 7389838, 9013759} are not suitable to design DBS trajectories when the user requests are unknown and unforeseeable.   

 Recently, there has been significant interest in using machine learning for drone trajectory optimization in unpredictable environments \cite{huang2019reinforcement, 8432464, 8654727, 9154432, 8807386, 8727504, foerster2017counterfactual, rashid2018qmix}. In \cite{huang2019reinforcement}, the problem of DBS trajectory optimization in an uplink non-orthogonal multiple-access (NOMA) system is studied and solved using reinforcement learning (RL).  The work in \cite{8432464} solves the problem of energy-efficient drone trajectory design using deep RL. The authors in \cite{8654727} study the problem of  interference-aware path planning for a group of DBSs using machine learning. The work in \cite{9154432} uses deep RL to design DBS trajectories so as to minimize the age of information (AoI) for sensing tasks in a multi-drone network.
 Meanwhile, in \cite{8807386}, the authors develop a multi-agent reinforcement learning (MARL) framework for dynamic resource allocation in wireless networks with multiple drones.
 The authors in \cite{8727504} propose a new approach to predict the users' mobility patterns and, then, they use a multi-agent Q-learning algorithm to design trajectories for a group of DBSs. The works in \cite{foerster2017counterfactual} and \cite{rashid2018qmix} propose distributed multi-agent algorithms that allow a group of agents to update their individual strategies considering the team benefits. 
However, most of the existing MARL solutions such as those in \cite{8654727}, and \cite{8727504, foerster2017counterfactual, rashid2018qmix} require DBSs to share their states and actions while searching for the optimal strategies. These traditional RL solutions have high complexity, as they solve multi-agent problems by updating strategies based on the entire set of agents' actions and strategies whose dimension increases exponentially with the number of agents. 
Meanwhile, the MARL solutions in \cite{9154432} and \cite{8807386} allow the agents to search strategies independently based on their own actions and states. However, using these RL solutions, the DBSs cannot optimize the sum utilities of all DBSs, and thus, cannot maximize the overall coverage of the ground users, since the DBSs are optimizing their individual utilities. 
In addition, traditional RL solutions such as those used in  \cite{8654727, 9154432, 8807386, 8727504, foerster2017counterfactual, rashid2018qmix} cannot efficiently adapt the trajectories of the DBSs to unseen environments, as they are often overfitted to their training tasks. This is because the hyper-parameters, exploration strategies, and initializations of traditional RL algorithms are manually adjusted for fitting the training tasks.  Once the agent faces an unseen task, manually adjusted RL algorithms may not converge to the optimal solution and, even if they do, the convergence speed will be very slow.
As a result, the traditional RL algorithms in \cite{huang2019reinforcement, 8432464, 8654727, 9154432, 8807386, 8727504, foerster2017counterfactual, rashid2018qmix} cannot effectively find optimal DBS trajectories in unseen environments. Finally, we note that in \cite{hu2020meta}, we studied the problem of trajectory design for a single DBS operating in a dynamic environment  using meta-learning. However, this prior work relies on a simple algorithm that cannot be scaled to larger networks.


\vspace{-0.35cm}
\subsection{Contributions}
The main contribution of this paper is a novel distributed framework for designing the trajectories of a group of cooperative DBSs in unpredictable, dynamic environments.  
To our best knowledge, \emph{this is the first work that designs trajectories for a team of DBSs in unpredictable, dynamic environments using a multi-agent meta reinforcement learning solution.}
In brief, our key contributions include: 
\begin{itemize}
\item We consider a practical drone-aided wireless system in which a team of DBSs cooperatively navigate in an area, under strict energy constraints and limited information on surrounding environments, with the  goal of providing uplink wireless connectivity to ground users. The DBSs can provide on-demand coverage to the ground users while adapting their trajectories to those users' unpredictable access requests. We formulate this trajectory design problem as an optimization framework whose structure is shown to be non-convex. To solve this problem, the ``myopic'' DBSs, which have only limited access to the information of ground users, seek to find trajectories that maximize the expected portion of served users -- a wireless coverage metric that we use as the team utility of the group of DBSs.

\item To solve the formulated trajectory optimization problem, we propose a novel, distributed, value decomposition reinforcement learning (VD-RL) algorithm. 
This algorithm is shown to reach a local optimal solution of the studied non-convex problem without requiring the DBSs to share their actions, states, or strategies. This makes the proposed VD-RL algorithm much less complex than traditional distributed MARL algorithms (e.g., those in \cite{8654727}, and \cite{8727504, foerster2017counterfactual, rashid2018qmix}), as the DBSs can update their strategies based on their own low-dimensional actions and states. 
The proposed VD-RL algorithm allows the DBSs to independently select strategies that maximize the team utility by decomposing and attributing this team utility to each DBS. Thus, with the proposed VD-RL algorithm, the DBSs can find a local optimal solution, which yields a much higher team utility, compared to the one that can be achieved by existing MARL solutions in \cite{9154432} and \cite{8807386}.

\item To improve the convergence speed of VD-RL for unseen environments, we propose a meta training mechanism that uses an optimization based solution to meta train the VD-RL algorithm. In particular, it seeks to find an optimal policy and value function initialization with a proper estimation on a distribution of environments, for the VD-RL algorithm. Using the this meta-learning approach, the VD-RL solution can quickly converge to a team optimal strategy when faced with an unseen task. The proposed optimization based meta training mechanism has a lower complexity compared to the meta training solutions in \cite{houthooft2018evolved} and \cite{ritter2018been}, because it does not require additional neural networks for meta-learning. Using the proposed approach, the DBSs can be meta-trained independently with their own actions and states. In particular, the proposed  meta-learning approach prepares the DBSs to cope with various tasks instead of a single sampled task as in \cite{xu2018meta}, by using a meta-trained initialization.  

\end{itemize}

Simulation results show that proposed VD-RL algorithm can improve the service coverage and convergence speed by up to $53.2\%$ and $30.6\%$, compared to traditional multi-agent algorithms. The results also show that using the proposed meta training mechanism, our VD-RL algorithm can find optimal trajectories in an unseen environment with a $53.8\%$ faster convergence speed, and a $5.6\%$ better coverage, compared to the VD-RL algorithm without meta-learning. 

The rest of this paper is organized as follows. The system model and problem formulation are described in Section \uppercase\expandafter{\romannumeral2}. In Section \uppercase\expandafter{\romannumeral3}, the proposed algorithm is developed and discussed. In Section \uppercase\expandafter{\romannumeral4}, numerical simulation results are presented. Finally, conclusions are drawn in Section \uppercase\expandafter{\romannumeral5}.}
 
\begin{figure}
  \centering
  \includegraphics[width=11 cm]{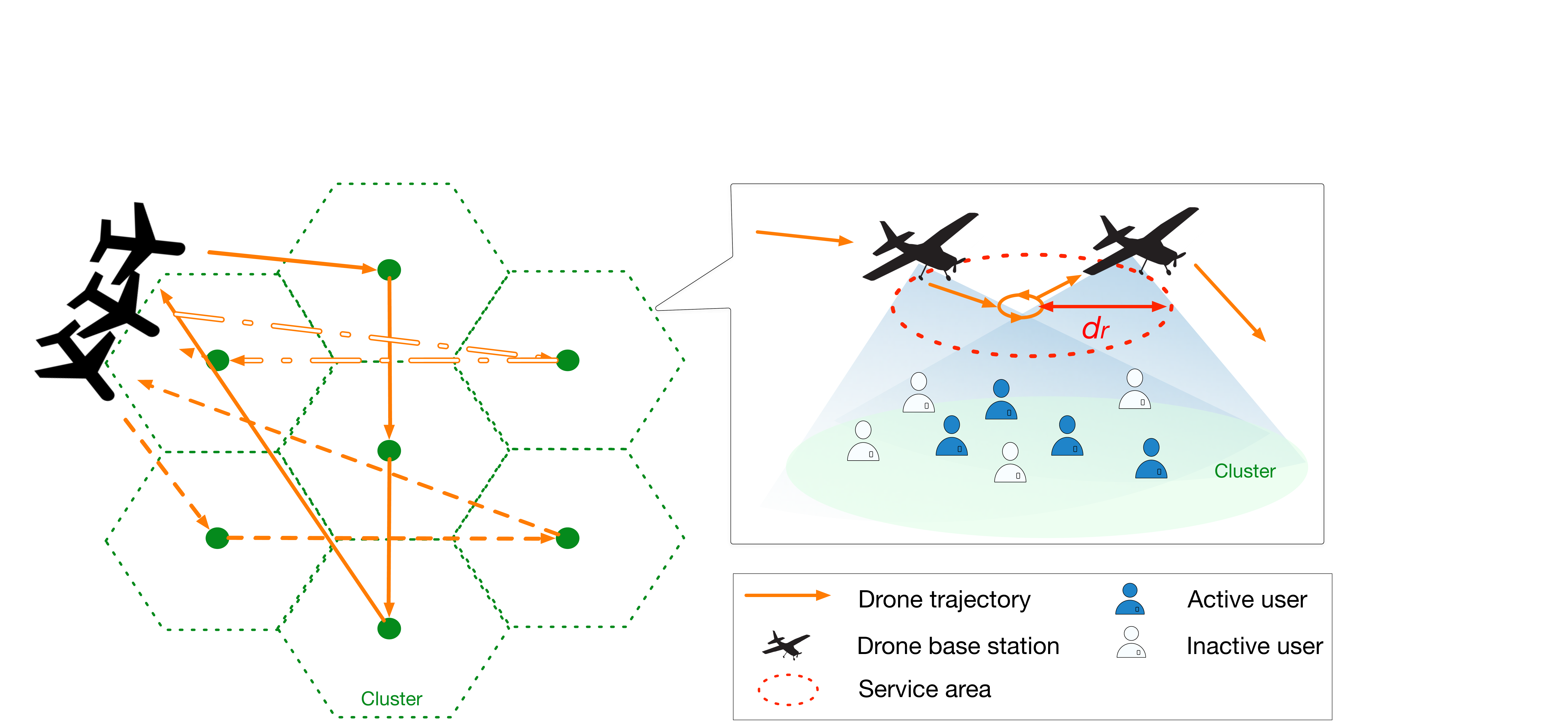}
  \caption{\footnotesize{Network topology.}}
  \label{Fig. 1}
  \centering
  \vspace{-0.8cm}
\end{figure}
\vspace{-0.4cm}
\section{System Model and Problem Formulation}
Consider a geographical area within which a set $\mathcal{U}$ of  $U$ randomly deployed terrestrial users request uplink data service. A set $\mathcal{N}$ of  $N$ fixed-wing DBSs are dispatched to satisfy the uplink access requests of those ground users, as shown in Fig. \ref{Fig. 1}. In the considered area, an user that requests data service is called an \emph{active user}, otherwise, it will be an \emph{inactive user}. We assume the users to be separated into different groups, each of which is called a \emph{cluster}. We also assume that, at any given time, each cluster will be served by a single DBS. The set of such clusters is denoted as $\mathcal{C}$. The DBSs will travel across the clusters in a steady straight-and-level flight (SLF), and hover over each cluster with a steady circular flight (SCF), at a constant speed $V_s$ \cite{7888557}. Each DBS $n\in\mathcal{N}$ flies at its own constant altitude $H_n$ to avoid collision with other DBSs. All the DBSs must return to their original location $O$ within a time period $T$ for battery charging. Moreover, all DBSs are assumed to have the same batter capacity. 
The trajectory that records DBS $n$'s movement within the time period $T$ is represented by a vector $\boldsymbol{\xi}_{n}=\left[{\xi}_{n,1},{\xi}_{n,2},\ldots,{{\xi}_{n,K}}\right]^\top$, with ${{\xi}_{n,k}}\in\mathcal{C} \cup \left\{O\right\}$ being the $k$-th cluster that DBS $n$ serves, or the initial location $O$ that DBS $n$ returns to after serving the users, and $K$ being the maximum number of locations each DBS can fly across under its energy constraints. In other words,  each DBS $n$ flies across no more than $K$ different locations, each of which is called one step on the trajectory of DBS $n$, the set of such steps is denoted as $\mathcal{K}$. For example, $\boldsymbol{\xi}_{n}=\left[{\xi}_{n,1}, O, \ldots, O\right]$, with ${\xi}_{n,1}\in\mathcal{C}$, implies that DBS $n$ serves cluster ${\xi}_{n,1}$ and then flies back to the initial location $O$, while $\boldsymbol{\xi}_{n}=\left[{\xi}_{n,1}, {\xi}_{n,2}, \ldots, {\xi}_{n,K-1}, O\right]$, with ${\xi}_{n,k}\in\mathcal{C}$, implies that DBS $n$ serves clusters ${\xi}_{n,1}, {\xi}_{n,2}, \ldots, {\xi}_{n,K-1}$, and then flies back to the initial location $O$. 


\vspace{-0.2cm}
\subsection{Communication Performance Analysis}
In our network, the users adopt an orthogonal frequency division multiple access (OFDMA) technique and transmit data over a set of uplink \emph{resource blocks} (RBs) \cite{6251827}. Each dispatched DBS will arbitrarily allocate one RB to each one of its associated users within a cluster. We assume that each DBS can keep serving its associated users within a $d_r$-meter radius over each cluster as shown in Fig.  \ref{Fig. 1}. The area within this range is called a \emph{service area}.
Also, user $u$ is assumed to request a total of $b _u$ data (in bits) at time epoch $t_u$. The DBSs must satisfy these requests within their battery capacitities.  Let $\boldsymbol{b}=\left[b_1,\ldots,b_U\right]$ and $\boldsymbol{t}=\left[t_1,\ldots,t_U\right]$ be, respectively, the vector of quantity and occurrence of the users' access request in the network. The quantity $b _u$ and active time $t_u$ are assumed to be independent random variables that follow unknown distributions. Hereinafter, we denote $\boldsymbol{z} = \left[\boldsymbol{b}, \boldsymbol{t}\right]$  as one \emph{realization} of the users' access request within duration $T$. In this model, the deployed DBSs are \emph{myopic}, that is, they know the access quantities and active time of only the users that they are currently serving. 
The path loss (in dB) of the line-of-sight (LoS) and non-line-of-sight (NLoS) air-to-ground communication links between DBS $n$ to user $u$ are given by the popular air-to-ground model in \cite{7037248}
 \begin{equation}\label{eq:pl}
\begin{split}
&h^{\textrm{LoS}}_{u,n} = 20\log\left(\frac{4\pi f_c d_{u,n}}{c}\right)+\varsigma^{\textrm{LoS}}_{u,n},\\
&h^{\textrm{NLoS}}_{u,n} = 20\log\left(\frac{4\pi f_c d_{u,n}}{c}\right)+\varsigma^{\textrm{NLoS}}_{u,n},
\end{split}
\end{equation}
where $f_c$ is the carrier frequency of the communication link between DBS $n$ and user $u$,  $d_{u, n}$ is the distance between user $u$ and DBS $n$, and $c$ is the speed of light. $\varsigma^{\textrm{LoS}}_{u,n}$ and $\varsigma^{\textrm{NLoS}}_{u,n}$ are, respectively, the additional path losses at the LoS and NLoS air-to-ground links between DBS $n$ to user $u$. The value of $\varsigma^{\textrm{LoS}}_{u,n}$ and $\varsigma^{\textrm{NLoS}}_{u,n}$ follow Gaussian distributions with different parameters $\left(\mu_{\textrm{LoS}}, \delta^2_{\textrm{LoS}}\right)$ and $\left(\mu_{\textrm{LoS}}, \delta^2_{\textrm{LoS}}\right)$, respectively. Note that the path loss values between DBS $n$ and user $u$ are considered to be stable with DBS $n$'s movement,  as the distance between DBS $n$ and user $u$ will only experience small changes when DBS $n$ flies within the service area. 
The signal-to-noise ratio (SNR) at the LoS and NLoS links between DBS $n$ and user $u$, will thus be 

 \begin{equation}\label{eq:dr}
\setlength{\abovedisplayskip}{-5 pt}
\begin{split}
&\gamma^{\textrm{LoS}}_{u, n}=\frac{P}{N_0B10^{\frac{h^{\textrm{LoS}}_{u, n}}{20}}},\\
&\gamma^{\textrm{NLoS}}_{u, n}=\frac{P}{N_0B10^{\frac{h^{\textrm{NLoS}}_{u, n}}{20}}},
\end{split}
\end{equation} 
where $P$ represents the transmit power of user $u$, which is assumed to be equal for all users. $N_0$ is the noise power spectral density, and $B$ is the RB bandwidth (equal for all RBs). Note that, the DBSs use different frequency bands to serve the users so as to avoid interference among the communication links.
The data rate at the link between DBS $n$ and user $u$ will then be $c_{u, n}= \beta^{\textrm{LoS}}_{u,n}{B}\log \left( {1 + \gamma^{\textrm{LoS}}_{u, n}} \right)+\beta^{\textrm{NLoS}}_{u,n}{B}\log \left( {1 + \gamma^{\textrm{NLoS}}_{u, n}} \right)$, where $\beta^{\textrm{LoS}}_{u,n}=\left[1+\varphi{\exp}\left(-\phi \frac{180}{\pi}\theta_{n,u}+\varphi\phi\right)\right]^{-1}$ is the probability of having a LoS link between DBS $n$ and user $u$, $\beta^{\textrm{NLoS}}_{u,n}=1-\beta^{\textrm{LoS}}$ is the probability of having a NLoS link between DBS $n$ and user $u$. Here, $\varphi$ and $\phi$ are constant values that depend on the studied communication environments, while $\theta_{n,u}$ is the elevation angle between DBS $n$ and user $u$.

\vspace{-0.2cm}
\subsection{Utility Function Model}
 
  In the studied scenario, the goal of the dispatched DBSs is to cover all access requests from ground users. In such a case, the utility of each DBS is defined as the \emph{successful service rate}, which captures the fraction of users being served by a given DBS in a given time period. Note that, when DBS $n$ arrives at a cluster, it will only serve the user requests that were not served by the DBSs that arrived at this cluster before DBS $n$. For the special case in which multiple DBSs arrive at a cluster at the same time, only one DBS will serve the entire cluster, and the remaining DBSs will directly proceed toward other clusters. Thus, the successful service rate achieved at DBS $n$ by serving cluster $\xi_{n,k}$ is given by  
 \begin{equation}\label{eq:ssrate}
\begin{split}
\mu_{n,k}\left(\boldsymbol{\xi}\right)=\frac{\sum_{u\in\mathcal{U}}\mathds{1}_{\left\{u\in \mathcal{U}_{{n, k}}, T-\tau_{n}^* \le t_u\le T-\tau_{n,k}\right\}}}{\sum_{u\in\mathcal{U}}\mathds{1}_{\left\{0\le t_u\le T\right\}}},
\end{split}
\end{equation} 
where $\boldsymbol{\xi}=[\boldsymbol{\xi}_1, \boldsymbol{\xi}_2,\ldots,\boldsymbol{\xi}_N]$ is the matrix of trajectories of the DBSs, $\mathcal{U}_{{n,k}}$ is the set of active users in cluster ${\xi}_{n,k}$. $\tau_{n,k}$ is the time duration that DBS $n$ is allowed to keep flying with its remaining energy level, after successfully serving cluster ${{\xi}_{n,k}}$. Note that, hereinafter, the time duration $\tau_{n,k}$ is called the \emph{available service time} of DBS $n$ at step $k$. In (\ref{eq:ssrate}), $\mathds{1}_{\left\{x\right\}}=1$ when $x$ is true, otherwise, $\mathds{1}_{\left\{x\right\}}=0$. Here, $\sum_{u\in\mathcal{U}}\mathds{1}_{\left\{0\le t_u\le T\right\}}$ is the number of active users within the studied time duration, and $\sum_{u\in\mathcal{U}}\mathds{1}_{\left\{u\in \mathcal{U}_{{n,k}}, T-\tau_{n}^* \le t_u\le T-\tau_{n,k}\right\}}$ is the number of active users served by DBS $n$ in cluster $\xi_{n,k}$. $\tau_{n}^*= \mathop {\min }\limits_{n'\in \mathcal{N}_n} \tau_{n', k'}$ is the available service time of the last DBS that arrived at cluster ${\xi}_{n, k}$ before DBS $n$. $T-\tau_{n}^* \le t_u\le T-\tau_{n,k}$ represents that user $u$ requests data before DBS $n$'s arrival, and have not served by any other DBSs, as shown in Fig. \ref{Fig. new}.
Here, $\mathcal{N}_{n}=\left\{n'\left| {n'\in\mathcal{N}\setminus n, {\xi}_{n',k'}} = {{\xi}_{n,k}}, \tau_{n',k'}\ge \tau_{n,k} \right.\right\}$ is the set of DBSs that arrive at cluster ${\xi}_{n, k}$ before DBS $n$, where ${\xi}_{n', k'}={\xi}_{n,k}$ implies that cluster ${\xi}_{n,k}$ is also the $k'$-th cluster served by DBS $n'$. DBS $n$'s available service time upon its arrival at cluster ${{\xi}_{n,k}}$ on trajectory $\boldsymbol{\xi}_{n}$ is $\tau_{n,k}=T-\sum_{\kappa=0}^{k-1}\frac{d_{n,{\kappa},{\kappa+1} }}{V}-\sum_{\kappa=1}^{k-1}D^*_{{n, \kappa}}$, with $d_{{n, \kappa},{\kappa+1}}$ being the distance between cluster ${\xi}_{n, \kappa}$ and ${\xi}_{n, \kappa+1}$. 
Here, $\frac{d_{n,{\kappa},{\kappa+1} }}{V}$ is the time needed by DBS $n$ to travel in an SLF from cluster ${\xi}_{n, \kappa}$ to cluster ${\xi}_{n, \kappa+1}$. $D^*_{{n, \kappa}}$ is the time needed by DBS $n$ for hovering while serving the users in cluster ${\xi}_{n, \kappa}$, and this hovering time is given by 
 \begin{equation}\label{eq:3}
\begin{split}
 D^*_{{n, \kappa}}=\mathop {\max }\limits_{u\in \mathcal{U}^*_{{n, \kappa}}} D_{u, n}-\frac{2d_r}{V},
\end{split}
\end{equation}
where $D_{u, n}=\frac{b_u}{c_{u, n}}$ is the transmission delay of user $u$,  when being served by DBS $n$. $\mathop {\max }\limits_{u\in \mathcal{U}_{{n, \kappa}}} D_{u, n}$ is the time that DBS $n$ used to serve all the users in cluster ${\xi}_{n, \kappa}$. $ D^*_{{n, \kappa}}$ is the time DBS $n$ consumes for SCF hovering, while $\frac{2d_r}{V}$ is the time DBS $n$ consumes its SLF travel within the service area. We define $\mathcal{U}^*_{{n, \kappa}}=\left\{u\left| {u\in\mathcal{U}_{{n, \kappa}}, T-\tau_{n}^* \le t_u\le T-\tau_{n,\kappa}} \right.\right\}$ as the set of active users that can be served by DBS $n$ in cluster ${{\xi}_{n, \kappa}}$. 

\begin{figure}
\setlength{\belowcaptionskip}{-3pt}
\setlength{\abovecaptionskip}{-5pt} 
  \centering
  \includegraphics[width=9 cm]{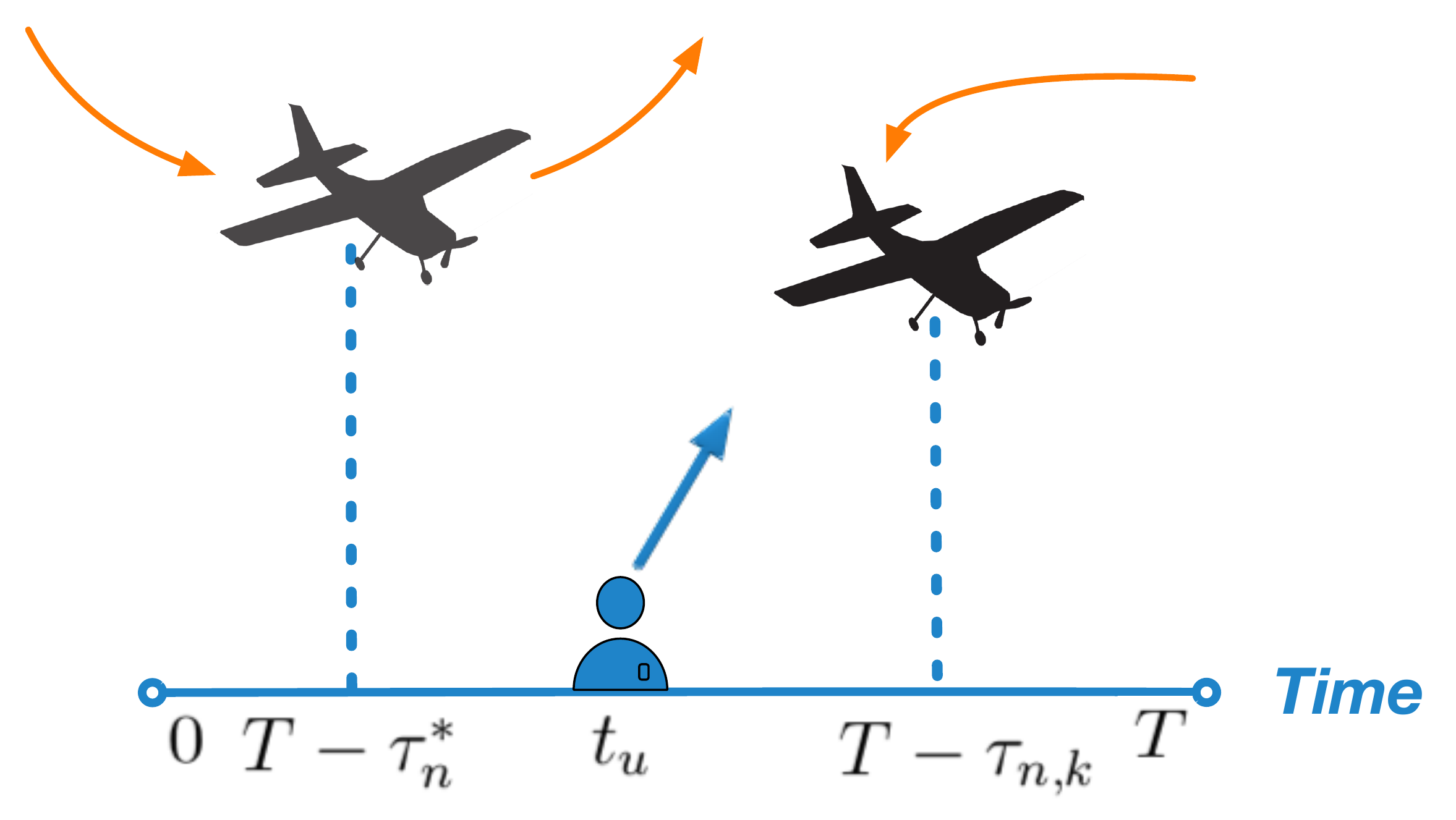}
  \caption{\footnotesize{Snapshot of one user access request.}}
  \label{Fig. new}
  \centering
  \vspace{-0.8cm}
\end{figure}

\vspace{-0.16cm}
\subsection{Problem Formulation}
In our system model, a group of energy-constrained and myopic DBSs fly independently to cover all the access requests from ground users. 
The successful service rate achieved by the DBSs is defined as a \emph{team utility}, and is given by
 \begin{equation}\label{eq:utility}
\begin{split}
G\left(\boldsymbol{\xi}\right)=\sum^	K_{k=1}\sum_{n\in\mathcal{N}}{\mu}_{n,k}\left(\boldsymbol{\xi}\right).
\end{split}
\end{equation}
The goal of the DBSs is to find optimal trajectories that maximize the expected team utility. Next, we first define expected team utility and then we introduce the optimization problem. Let $\pi_n\left(\xi\left| {{\xi}_{n,k}},\tau_{n,k} \right.\right)$ be the strategy of DBS $n$, defined as the probability that DBS $n$ moves toward cluster $\xi\in\mathcal{C} \cup \left\{O\right\}$, after successfully serving cluster ${{\xi}_{n,k}}$ with {available service time} $\tau_{n,k}$, and let $\boldsymbol{\pi}=\left[\pi_n\left(\xi\left| {{\xi}_{n,k}},\tau_{n,k} \right.\right)\right]_{n\in\mathcal{N}, k \in\mathcal{K}}$ be the vector of strategies of all DBSs. Then the expected team utility is defined as 
 \begin{equation}\label{eq:eutility}
 \setlength{\abovedisplayskip}{3 pt}
 \setlength{\belowdisplayskip}{3 pt}
\begin{split}
\overline G\left(\boldsymbol{\pi}\right)= \sum_{\boldsymbol{\xi}\in\mathcal{E}}G\left(\boldsymbol{\xi}\right)\prod\limits^{N}_{n=1}\prod\limits^{K}_{k=1}\pi_n\left(\xi\left| {{\xi}_{n,k}},\tau_{n,k} \right.\right).
\end{split}
\end{equation}
As such, the trajectory design problem can be formulated as
\addtocounter{equation}{0}
\begin{equation}\label{opt}
 \max_{\boldsymbol{\pi}}  \overline G\left(\boldsymbol{\pi}\right),
\end{equation}
\vspace{-0.8cm}
\begin{align}\label{c1}
\setlength{\abovedisplayskip}{-3 pt}
&\;\;\;\;\rm{s.\;t.}\scalebox{1}{$\;\;\;\;  \sum_{\boldsymbol{\xi}\in\mathcal{E}}\prod\limits^{N}_{n=1}\prod\limits^{K}_{k=1}\pi_n\left(\xi\left| {{\xi}_{n,k}},\tau_{n,k} \right.\right)=1, $} \tag{\theequation a}\\
&\;\;\;\;\;\;\;\;\scalebox{1}{$\;\;\;\;\;\sum_{{a}_{n,k}\in\mathcal{C} \cup \left\{O\right\}}\pi_n\left(\xi\left| {{\xi}_{n,k}},\tau_{n,k} \right.\right)=1, \forall n\in \mathcal{N},  \boldsymbol{\xi}\in \mathcal {E}, k \in \mathcal{K},$} \tag{\theequation b}\\
&\;\;\;\;\;\;\;\;\scalebox{1}{$\;\;\;\;\;0\le \pi_n\left(\xi\left| {{\xi}_{n,k}},\tau_{n,k} \right.\right)\le 1, \forall n\in \mathcal{N}, \boldsymbol{\xi}\in \mathcal {E}, k \in \mathcal{K},$} \tag{\theequation c}
\end{align}
where $\mathcal{E}$ is the set of all possible trajectories of the DBSs. Here, (\ref{opt}a) means that the DBSs must choose trajectories from $\mathcal{E}$.  (\ref{opt}b) indicates that, within the considered time duration, each DBS $n$ must choose to serve one cluster in $\mathcal{C}$ or return to the origin. The optimal solution of problem (\ref{opt}) guarantees a maximum expected team utility at the DBSs, and it is called a \emph{team optimal strategy}. Here, we note that the use of traditional optimization algorithms, such as branch and bound or nonlinear programming, is not suitable to solve (\ref{opt}), as problem (\ref{opt}) is non-convex, and the successful service rate ${\mu}_{n,k}\left(\boldsymbol{\xi}\right)$ achieved at each DBS is unpredictable with the values of $\boldsymbol{b}$ and $\boldsymbol{t}$ following unknown distributions. 
 Moreover, traditional machine learning algorithms such as Q learning, policy gradient, and echo state networks (ESN) \cite{watkins1992q, chen2019joint, sutton2000policy} that have been previously used to solve complex optimization problem are also not suitable to solve the problem (\ref{opt}). This is because those algorithms must be manually adjusted to solve their training tasks, and they cannot find optimal strategies for the DBSs in unseen environments. 
To solve the non-convex problem (\ref{opt}) formulated in dynamic, unpredictable environments, we propose a distributed meta-trained VD-RL algorithm that finds team optimal strategies by meta training a distributed VD-RL solution equipped with a primary estimation on unseen environments. The distributed VD-RL algorithm updates the DBSs' strategies and estimation on the strategy outcomes at each individual DBS, based on only this DBS's actions and states. The meta training procedure finds an initialization of the policy and value functions for the VD-RL solution. This initialization is close to all optimal policy and value functions at all possible environments and, hence, it enables the VD-RL solution quickly converge in the dynamic environments and reduces the VD-RL solution's cost on time, energy, and the drone hardware.
The proposed meta-trained VD-RL algorithm is introduced in the next section.

\vspace{-0.3cm}
\section{Proposed Value Decomposition-Reinforcement Learning Algorithm with Meta Training}

We now introduce a distributed meta-trained VD-RL algorithm, that merges the concept of value decomposition network \cite{sunehag2017value}, model agnostic meta-learning \cite{finn2017model}, with the policy gradient (PG) framework. The traditional PG algorithm can find the optimal trajectory for a single DBS. However, PG cannot find a team optimal strategy for a group of DBSs, as it will direct all the DBSs to one trajectory. In order to design trajectories for multiple DBSs, an algorithm that can reach a team optimal strategy for the DBSs must be proposed. Moreover, to prepare the DBSs for unseen environments, the proposed algorithm must not be overfitted to its training tasks, and it should converge to a team optimal strategy when the team utility function in (\ref{eq:utility}) changes.
To address these challenges, we propose a meta-trained VD-RL that coordinates a group of DBSs in various environments. Next, we first introduce the proposed VD-RL algorithm which solves the non-convex problem (\ref{opt}). Then, we explain how to use meta-learning approaches to meta train this VD-RL algorithm in order to allow it to cope with various environments.

\vspace{-0.2cm}
\subsection{Value Decomposition based Reinforcement Learning Algorithm}
Next, we first introduce the components of the VD-RL algorithm. Then, we explain how the VD-RL algorithm can decompose problem (\ref{opt}) to problems that can be solved at each individual DBS. Finally, we introduce the procedure of using the proposed VD-RL algorithm to solve the non-convex problem in (\ref{opt}). 
\subsubsection{Value decomposition based reinforcement learning components}
The proposed VD-RL algorithm consists of seven components
\begin{itemize}
\item \emph{Agents}: The agents in VD-RL are the the DBSs in set $\mathcal{N}$. 

\item \emph{States}: Each agent has a state that consists of both its location, represented by the cluster it currently serves i.e. ${\xi}_{n, k}$, and its energy level, which is captured by the time $\tau_n,k$ that this DBS still has in order to return to the origin. The state of each DBS $n$ at step $k$ is given by $\boldsymbol{s}_{n,k}=\left[{\xi}_{n, k}, {\tau}_{n,k}\right]$. The set of states at all DBSs is {$\mathcal{S}=\left\{\boldsymbol{S}_0, \boldsymbol{S}_1,\ldots,\boldsymbol{S}_K\right\}$}, with {$\boldsymbol{S}_k=\left[\boldsymbol{s}_{1, k},\boldsymbol{s}_{2,k},\ldots,\boldsymbol{s}_{N,k}\right]$} being the matrix of the DBSs' states at step $k$.


\item \emph{Actions}: The action of each agent is the cluster that it seeks to serve, or the origin location that it will return to after serving several clusters. The action chosen by DBS $n$ at step $k$ is given by $a_{n,k}\in\mathcal{C}\cup\left\{O\right\}$, while the vector of all DBSs' actions at step $k$ is $\boldsymbol{a}_k=\left[a_{1,k},a_{2,k},\ldots,a_{N,k}\right]$.





\item \emph{Strategy}: The strategy of each DBS is defined as the probability of choosing a given action ${a}_{n,k}\in\mathcal{C}\cup\left\{O\right\}$ at a given state $\boldsymbol{s}_{n,k}$ and is denoted by $\pi_n\left({a}_{n,k}\left| {\boldsymbol{s}_{n,k}} \right.\right)$. $\boldsymbol{\pi}=\left[\pi_n\left({a}_{n,k}\left| {\boldsymbol{s}_{n,k}} \right.\right)\right]_{n\in\mathcal{N}, k \in\mathcal{K}}$ is the vector of strategies of all DBSs. 

\item \emph{Policy function}: We define a policy function $\boldsymbol{\pi}_{\boldsymbol{\theta}_{a,n}}$ that is a deep neural network parametrized by $\boldsymbol{\theta}_{a,n}$ and is used to generate the strategy of DBS $n$. This policy function $\boldsymbol{\pi}_{\boldsymbol{\theta}_{a,n}}$ takes DBS $n$'s state as an input and outputs a strategy for DBS $n$ at this state. 

\item \emph{Reward}: The reward of each DBS measures the benefit of a selected action. In particular, aiming at maximizing coverage in the considered area, the reward of each DBS is defined as the successful service rate achieved by all the DBSs, which is given by\footnote{Note that, DBS $n$'s successful service rate $\mu_{n,k}$ defined in (\ref{eq:ssrate}), depends on DBS $n$'s current action $a_{n,k}$, state $\boldsymbol{s}_{n,k}$, and the actions taken by other DBSs before DBS $n$'s arrival at cluster $\xi_{n,k}$. As the DBSs operate asynchronously in the wireless environment as shown in Fig. \ref{Fig. 2}, one DBS may reach step $k'  \ne k$ on its trajectory at the time DBS $n$ arrives at its step $k$. In other words, at time epoch $T-\tau_{n,k}$, the DBSs will reach different steps of their trajectories. The set of DBSs' states at time epoch $T-\tau_{n,k}$ is denoted as $\mathcal{B}_{T-\tau_{n,k}}$. In this case,
 the team stage reward resulting from action $\boldsymbol{a}_k$ should be $r\left(\boldsymbol{a}_k\left|\cup_{n\in\mathcal{N}} \mathcal{B}_{T-\tau_{n,k}}\right.\right)$. Hereinafter, we denote it as $r\left(\boldsymbol{a}_k\left| \boldsymbol{S}\right.\right)$ for simplicity. It is worthy noting that, within the proposed VD-RL algorithm,  the DBSs only record the number of served active users at each step on their trajectories.} $r\left(\boldsymbol{a}_k\left|\mathcal{S}\right.\right) = \sum_{n\in\mathcal{N}}{\mu}_{n,k}\left(\boldsymbol{\xi}\right)$. Here, different from traditional RL algorithms in which each agent only maximizes its own reward, our proposed VD-RL algorithm enables each DBS to maximize the reward of all DBSs, which is also called the team stage reward.  

\item \emph{Value function}: We define a value function $V\left(\boldsymbol{S}_k\right)$ that is a deep neural network used to estimate the DBSs' achievable future rewards at every state $\boldsymbol{S}_k$. In particular, within the proposed VD-RL algorithm, the goal of the DBSs is to find a team optimal strategy that maximizes the expected team utility in (\ref{eq:utility}). To this end, the DBSs must consider the current and also future rewards that can be achieved at every state. Thus, at every state $\boldsymbol{S}_k$, the DBSs seeks to maximize a discounted future reward $\sum\limits_{n\in\mathcal{N}}\sum\limits^{K}_{\kappa=k}\gamma^{k-\kappa+1}r\left(\boldsymbol{a}_{\kappa}\left|\mathcal{S}\right.\right)$, which is estimated by value function $V\left(\boldsymbol{S}_k\right)$, with $\gamma$ being the discounted factor. 
\end{itemize}

\begin{figure}
  \centering
  \includegraphics[width=12 cm]{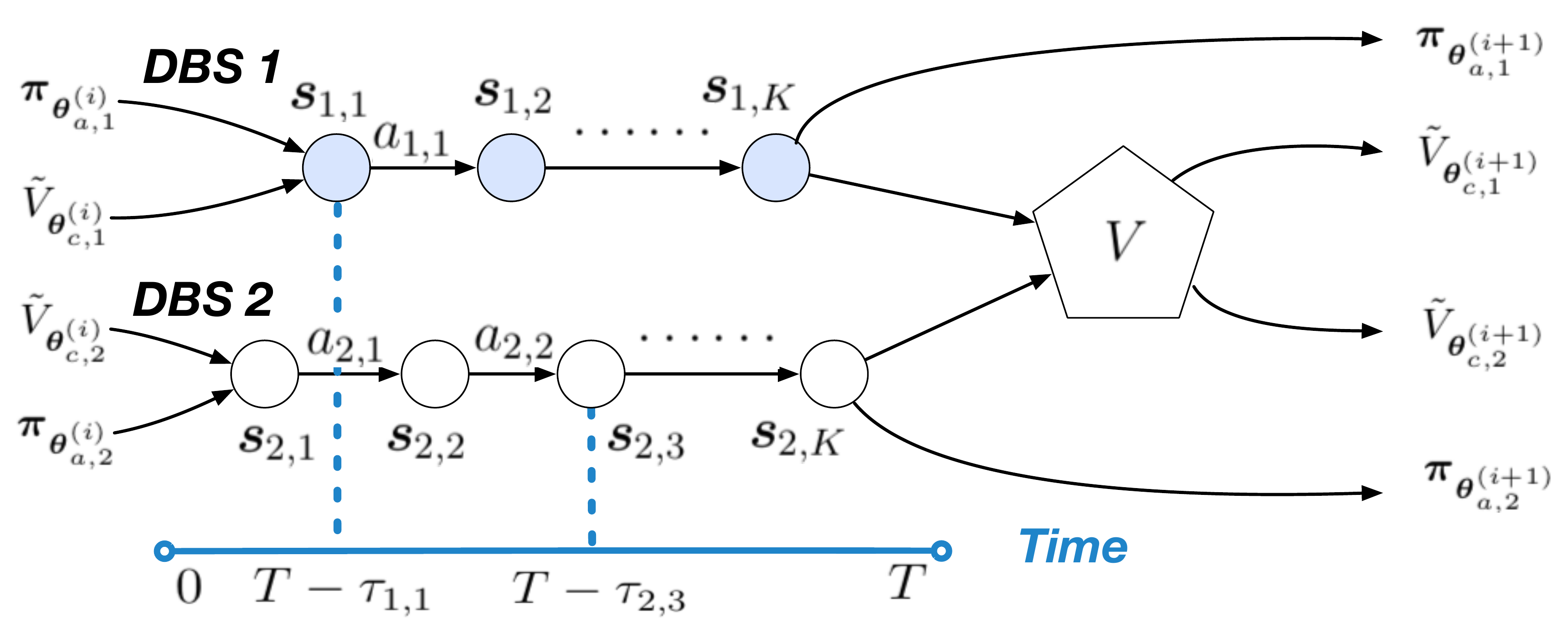}
  \caption{\footnotesize{Illustration of the proposed VD-RL algorithm.}}
  \label{Fig. 2}
  \centering
  \vspace{-0.7cm}
\end{figure}
In VD-RL, the DBSs interact with the unknown wireless environment by selecting actions in $\mathcal{C}\cup\left\{O\right\}$ based on the strategy generated by policy function $\boldsymbol{\pi}_{\boldsymbol{\theta}_{a,n}}$. In particular, the DBSs, select actions step by step and receive the team stage rewards as feedback. An \emph{experience} defined in vector $\boldsymbol{e}_n=\left[\boldsymbol{s}_{n,1}, {a}_{n,1}, r\left(\boldsymbol{a}_1\left|\mathcal{S}\right.\right), \ldots, \boldsymbol{s}_{n,K}, {a}_{n,K}, r\left(\boldsymbol{a}_K\left|\mathcal{S}\right.\right)\right]$ is collected by each DBS $n\in\mathcal{N}$. Each DBS $n$ then updates its value function with its collected experience $\boldsymbol{e}_n$ and updates its strategy to maximize future rewards estimated by $V\left(\boldsymbol{S}_k\right)$. In particular, the DBSs will update its estimation on future rewards after they return to the origin. At the origin, each DBS $n$ only needs to know the number of active users served by all the DBSs from their last experience so as to update its estimation on future rewards, it does not have to know the actions taken by other DBSs. However, value function $V\left(\boldsymbol{S}_k\right)$ still depends on the states of all DBSs. Hence, each DBS may not be able to train its value function individually. Therefore, we introduce a value decomposition method using which each DBS can train its value function individually. 

\subsubsection{Value decomposition} The VD-RL algorithm needs to find a value function accurately estimate future rewards, and, thus, seeks to minimize the temporal difference (TD) error metric that is defined as follows\cite{baird1993advantage}:
 \begin{equation}\label{eq:tderror}
 \setlength{\abovedisplayskip}{-2 pt}
\begin{split}
A\left(\boldsymbol{a}_{k}, {\boldsymbol{S}_{k}}\right) &= N r\left(\boldsymbol{a}_k\left|\boldsymbol{S}\right.\right)+ \gamma V\left(\boldsymbol{S}_{k+1}\right)- V\left(\boldsymbol{S}_k\right)
\end{split}
\end{equation} 
where TD error $A\left(\boldsymbol{a}_{k}, {\boldsymbol{S}_{k}}\right)$ is called the \emph{team advantage} of the DBSs at state ${\boldsymbol{S}_{k}}$ with action $\boldsymbol{a}_{k}$. The team advantage measures the difference between the DBSs' future reward estimated by $Nr\left(\boldsymbol{a}_k\left|\boldsymbol{S}\right.\right)+ \gamma V\left(\boldsymbol{S}_{k+1}\right)$, and the one estimated by $V\left(\boldsymbol{S}_k\right)$. In essence, the team advantage keeps improving the value function $V\left(\boldsymbol{S}_k\right)$'s estimation accuracy using new sampled experiences.
Moreover, the team advantage also reveals how actions in $\boldsymbol{a}_{k}$ are better than other action selections by showing the difference between the future reward reached by actions in $\boldsymbol{a}_{k}$ and the estimated future reward $V\left(\boldsymbol{S}_k\right)$. 
However, to enable the DBSs to update their value functions individually, one must divide the value of function $V\left(\boldsymbol{S}_k\right)$ among all DBSs. The value allocated to each DBS must reinforce team beneficial actions, and weaken all other actions at each DBS.
Our VD-RL algorithm implements this value decomposition across the DBSs. In particular, the proposed VD-RL algorithm decomposes value $V\left(\boldsymbol{S}_k\right)$ under the assumption that this value is the sum of the values attributed to each DBS
 \begin{equation}\label{eq:vd}
  \setlength{\abovedisplayskip}{2 pt}
\setlength{\belowdisplayskip}{2 pt}
\begin{split}
V\left(\boldsymbol{S}_k\right) = \sum_{n\in\mathcal{N}}\tilde{V}_{\boldsymbol{\theta}_{c,n}}\left(\boldsymbol{s}_{n,k}\right),
\end{split}
\end{equation} 
where $\tilde{V}_{\boldsymbol{\theta}_{c,n}}\left(\boldsymbol{s}_{n,k}\right)$ is the individual value function parametrized by $\boldsymbol{\theta}_{c,n}$ at DBS $n$. Thus, the team advantage $A\left(\boldsymbol{a}_{k}, {\boldsymbol{S}_{k}}\right)$ be expressed as
 \begin{equation}\label{eq:vd2}
\begin{split}
A\left(\boldsymbol{a}_{k}, {\boldsymbol{S}_{k}}\right) &= N r\left(\boldsymbol{a}_k\left|\boldsymbol{S}\right.\right)+ \gamma \sum_{n\in\mathcal{N}}\tilde{V}_{\boldsymbol{\theta}_{c,n}}\left(\boldsymbol{s}_{n,k+1}\right)- \sum_{n\in\mathcal{N}}\tilde{V}_{\boldsymbol{\theta}_{c,n}}\left(\boldsymbol{s}_{n,k}\right).
\end{split}
\end{equation} 
Based on the assumption in (\ref{eq:vd}), the DBSs can update their policy and value functions independently.


\subsubsection{Value decomposition based reinforcement learning solution}
When searching for the team optimal strategy that solves problem (\ref{opt}), 
the VD-RL algorithm needs to find optimal value functions that accurately estimate future rewards at every state, as well as the optimal policy functions that can always yield the actions leading the highest future rewards at each DBS. 
In particular, the update of the individual value function $\tilde{V}_{\boldsymbol{\theta}_{c,n}}\left(\boldsymbol{s}_{n,k}\right)$ of each DBS $n$ is given by  
 \begin{equation}\label{eq:vupdate}
\begin{split}
\boldsymbol{\theta}^{\left(i+1\right)}_{c,n}&=\boldsymbol{\theta}^{\left(i\right)}_{c,n}-\alpha^{\left(i\right)}_c\nabla_{\boldsymbol{\theta}_{c,n}} \sum^K_{k=1}A^2\left(\boldsymbol{a}_{k}, {\boldsymbol{S}_{k}}\right) \\
&= \boldsymbol{\theta}_{c,n}^{\left(i\right)}-\alpha^{\left(i\right)}_c\nabla_{\boldsymbol{\theta}_{c,n}} \sum^K_{k=1} \left(N r\left(\boldsymbol{a}_k\left|\boldsymbol{S}\right.\right)+ \gamma\sum_{n\in\mathcal{N}} \tilde{V}_{\boldsymbol{\theta}_{c,n}}\left(\boldsymbol{s}_{n,k+1}\right)- \sum_{n\in\mathcal{N}}\tilde{V}_{\boldsymbol{\theta}_{c,n}}\left(\boldsymbol{s}_{n,k}\right)\right)^2\\
&=\boldsymbol{\theta}^{\left(i\right)}_{c,n}-2\alpha^{\left(i\right)}_c  \sum^K_{k=1} A\left(\boldsymbol{a}_{k}, {\boldsymbol{S}_{k}}\right)\nabla_{\boldsymbol{\theta}_{c,n}} \left(\gamma  \tilde{V}_{\boldsymbol{\theta}_{c,n}}\left(\boldsymbol{s}_{n,k+1}\right)-\tilde{V}_{\boldsymbol{\theta}_{c,n}}\left(\boldsymbol{s}_{n,k}\right) \right),
\end{split}
\end{equation} 
where $\boldsymbol{\theta}^{\left(i\right)}_{c,n}$ is the value function parameter of DBS $n$ and $\alpha^{\left(i\right)}_c$ is the value function parameter update step size, at the $i$-th VD-RL training iteration.

After precisely estimating the future team stage rewards using $V\left(\boldsymbol{S}_k\right)$, and attributing the value to each one of the DBS, the policy function of each DBS can be updated.  Let $\tilde {A}_n\left({a}_{n,k}, {\boldsymbol{s}_{n,k}}\right)=  r\left(\boldsymbol{a}_k\left|\boldsymbol{S}\right.\right)+ \gamma \tilde{V}_{\boldsymbol{\theta}_{c,n}}\left(\boldsymbol{s}_{n,k+1}\right)- \tilde{V}_{\boldsymbol{\theta}_{c,n}}\left(\boldsymbol{s}_{n,k}\right)$ be the individual advantage at BS $n$ at state $\boldsymbol{s}_{n,k}$ with action ${a}_{n,k}\in\boldsymbol{a}_k$. Hence, we have $A\left(\boldsymbol{a}_{k}, {\boldsymbol{S}_{k}}\right)=\sum_{n\in\mathcal{N}}\tilde {A}_n\left({a}_{n,k}, {\boldsymbol{s}_{n,k}}\right)$. 
%
Based on the the policy gradient theorem \cite{sutton2000policy}, the update on DBS $n$'s policy function parameters is given by
 \begin{equation}\label{eq:pupdate}
\begin{split}
\boldsymbol{\theta}^{\left(i+1\right)}_{a,n}=\boldsymbol{\theta}_{a,n}^{\left(i\right)}+\alpha^{\left(i\right)}_a\sum^K_{k=1}\tilde {A}_n\left({a}_{n,k}, {\boldsymbol{s}_{n,k}}\right)\nabla_{\boldsymbol{\theta}_{a,n}}\log \pi_{\boldsymbol{\theta}_{a,n}}\left(a_{n,k}\left| {\boldsymbol{s}_{n,k}} \right.\right),
\end{split}
\end{equation}
where $\boldsymbol{\theta}_{a,n}^{\left(i\right)}$ is the policy function parameter at DBS $n$ and $\alpha^{\left(i\right)}_a$ is the  policy function parameter update step size, at the $i$-th VD-RL training iteration. 

  \begin{algorithm}[t]\footnotesize

\caption{VD-RL algorithm for trajectory design at one realization of user access request. }   

\label{alg:VD-RL}   
\setlength{\abovecaptionskip}{-40pt} 
\setlength{\belowcaptionskip}{-40pt}
\begin{algorithmic} [1] 
\REQUIRE User locations, time constraints. \\ 
\vspace{2pt}  
\ENSURE Initialize value functions $\tilde{V}_{\boldsymbol{\theta}^{\left(1\right)}_{c,n}}$, and policy functions $\boldsymbol{\pi}_{\boldsymbol{\theta}^{\left(1\right)}_{a,n}}$, for $n \in\mathcal{N}$.\\

\vspace{2pt}  
\FOR {VD-RL training epoch $i=1:I$} 
\vspace{2pt}  
\STATE The DBSs carry out an experience by generating a sequence of actions that are randomly selected based on current policy functions.
\vspace{2pt}  
\STATE Calculate advantage $A$ based on (\ref{eq:tderror}).
\vspace{2pt}  
\FOR {Each DBS $n=1:N$} 
\vspace{2pt}  
\STATE Update individual value function with (\ref{eq:vupdate}).
\STATE Calculate individual advantages $\tilde {A}_n\left({a}_{n,k}, {\boldsymbol{s}_{n,k}}\right)$.
\vspace{2pt}  
\STATE Update policy function with (\ref{eq:pupdate}). 
\vspace{2pt}  
\ENDFOR
\vspace{2pt}  
\ENDFOR
\end{algorithmic}
\end{algorithm} 
The proposed VD-RL solution is summarized in Algorithm \ref{alg:VD-RL}.
At the beginning of the algorithm, each DBS serves its users with initial value function $V_{\boldsymbol{\theta}^{\left(1\right)}_{c,n}}$ and policy function $\boldsymbol{\pi}_{\boldsymbol{\theta}^{\left(1\right)}_{a,n}}$. The successful service rates that the DBSs achieved with the selected policy functions are calculated and recorded by the DBSs upon their return to the origin. 
The DBSs update their value parameters and policy parameters based on (\ref{eq:vupdate}) and (\ref{eq:pupdate}), using the recorded successful service rates.  Then, the DBSs serve the users with the updated policy functions.
 \vspace{-0.3cm}
 \subsubsection{ Convergence and complexity analysis} Next, we show that the proposed VD-RL algorithm is guaranteed to converge to a local team optimal strategy. 
 \begin{proposition}\label{proposition2}
\emph{The proposed distributed VD-RL algorithm is guaranteed to converge to a local optimal solution of problem (\ref{opt}), if the following conditions are satisfied:
}
\emph{\begin{enumerate}
	\item Value function $\tilde{V}_{\boldsymbol{\theta}_{c,n}}\left(\boldsymbol{s}_{n,k}\right)$ converges to a local minimum with $\nabla_{\boldsymbol{\theta}_{c,n}} A^2\left(\boldsymbol{a}_{k}, {\boldsymbol{S}_{k}}\right)=0$, and advantage function $A\left(\boldsymbol{a}_{k}, {\boldsymbol{S}_{k}}\right)\neq0$.
	\item $\mathop {\max }\limits_{a_{n,k}\in{\mathcal{C}},\boldsymbol{s}_{n,k}\in{\mathcal{S}} }\left|\frac{\partial \prod\limits_{n\in\mathcal{N}}\pi_{\boldsymbol{\theta}_{a,n}}\left(a_{n,k}\left| {\boldsymbol{s}_{n,k}} \right.\right)}{\partial \theta_{i} \partial \theta'_{j}}\right|<\infty$, for any elements $\theta_{i}$ and $\theta_{j}$ in the policy function parameter vector $\boldsymbol{\theta}_{a}=\left[\boldsymbol{\theta}_{a,1}, \boldsymbol{\theta}_{a,2}, \ldots, \boldsymbol{\theta}_{a,N}\right]$.
	\item $\lim_{i \to \infty}\alpha^{\left(i\right)}_c=0$, $\sum^{\infty}_{i=1}\alpha^{\left(i\right)}_c=\infty$, $\lim_{i \to \infty}\alpha^{\left(i\right)}_a=0$ and $\sum^{\infty}_{i=1}\alpha^{\left(i\right)}_a=\infty$. 
\end{enumerate}}
\end{proposition}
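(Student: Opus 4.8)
The plan is to recognize the two coupled recursions (\ref{eq:vupdate}) and (\ref{eq:pupdate}) as a value-decomposed actor--critic scheme and to prove convergence by treating the critic and the actor separately, exploiting the fact that condition~1 already presumes the individual value functions $\tilde V_{\boldsymbol{\theta}_{c,n}}$ have settled at a local minimum of the squared team advantage. This effectively decouples the analysis: once condition~1 fixes the critic, what remains is to show that the actor update drives $\boldsymbol{\theta}_a=[\boldsymbol{\theta}_{a,1},\ldots,\boldsymbol{\theta}_{a,N}]$ to a stationary point of the expected team utility $\overline G(\boldsymbol{\pi})$. First I would verify that (\ref{eq:pupdate}) is, in expectation, a stochastic gradient ascent step on $\overline G$. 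Because the joint strategy factorizes as the product $\prod_n \pi_{\boldsymbol{\theta}_{a,n}}$ across DBSs in (\ref{eq:eutility}), the gradient $\nabla_{\boldsymbol{\theta}_{a,n}}\overline G$ splits cleanly per agent, and the policy gradient theorem \cite{sutton2000policy} identifies it with $\mathbb{E}[\sum_k A(\boldsymbol{a}_k,\boldsymbol{S}_k)\nabla_{\boldsymbol{\theta}_{a,n}}\log\pi_{\boldsymbol{\theta}_{a,n}}]$, where $A$ is the team advantage.

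The second step is to justify that replacing the team advantage $A$ by the individual advantage $\tilde A_n$ in (\ref{eq:pupdate}) does not bias the ascent direction. Here I would invoke the decomposition $A(\boldsymbol{a}_k,\boldsymbol{S}_k)=\sum_n \tilde A_n(a_{n,k},\boldsymbol{s}_{n,k})$ from (\ref{eq:vd2}) together with the score-function identity $\mathbb{E}_{a_{n,k}}[\nabla_{\boldsymbol{\theta}_{a,n}}\log\pi_{\boldsymbol{\theta}_{a,n}}]=0$: the cross terms $\tilde A_m$ with $m\neq n$ should act as action-independent baselines whose contribution to DBS $n$'s gradient vanishes in expectation. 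Consequently each local update becomes an unbiased estimate of the corresponding block of $\nabla_{\boldsymbol{\theta}_a}\overline G$, so the concatenated update tracks the true team policy gradient.

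With the actor identified as a stochastic gradient ascent, the third step invokes the standard stochastic-approximation / supermartingale convergence argument. Condition~3 supplies the Robbins--Monro step-size requirements, with $\sum_i \alpha^{(i)}_a=\infty$ guaranteeing that the iterates can reach a stationary point and $\alpha^{(i)}_a\to 0$ suppressing the sampling noise asymptotically. Condition~2 guarantees that $\prod_n\pi_{\boldsymbol{\theta}_{a,n}}$, and hence $\overline G$, has bounded second derivatives, so the objective has a Lipschitz-continuous gradient; this is exactly the regularity needed to keep the ascent iterates bounded and to force convergence to a point where $\nabla_{\boldsymbol{\theta}_a}\overline G=0$. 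Condition~1 closes the loop by ensuring the advantage estimates feeding the actor are those of a converged critic (the residual $A\neq 0$ merely indicating the policy itself is not yet optimal), so that $\nabla_{\boldsymbol{\theta}_a}\overline G=0$ genuinely characterizes a local optimum of (\ref{opt}) rather than an artifact of a moving value estimate.

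The main obstacle I anticipate is the second step: rigorously showing that the value decomposition (\ref{eq:vd}) yields an unbiased team policy gradient. The baseline argument is clean only insofar as $\tilde A_m$ for $m\neq n$ is independent of $a_{n,k}$, yet the shared team reward $r(\boldsymbol{a}_k|\boldsymbol{S})$ appearing in every $\tilde A_m$ couples the agents' actions through the asynchronous arrival structure of Fig.~\ref{Fig. new}. I would therefore need to argue, using the myopic successful-service-rate structure in (\ref{eq:ssrate}), that the portion of $r$ attributable to other DBSs is conditionally independent of DBS $n$'s action given the state, so that the cross terms indeed vanish; failing that, the decomposition introduces a bias that must be explicitly bounded and shown to be asymptotically dominated by the vanishing step sizes of condition~3.
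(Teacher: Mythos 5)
Your proposal follows essentially the same skeleton as the paper's proof: both reduce Proposition \ref{proposition2} to the actor--critic convergence conditions of \cite{sutton2000policy} (critic at a local minimum of the squared advantage, policy-gradient form of the actor update, bounded second derivatives of the policy, Robbins--Monro step sizes), and both dispose of conditions 1), 3) and 4) exactly as you do. The genuine difference lies in how the crucial middle step is handled. The paper states the required actor update with the \emph{team} advantage $A\left(\boldsymbol{a}_{k}, \boldsymbol{S}_{k}\right)$ as in (\ref{eq:pf0}), then simply asserts that the distributed update (\ref{eq:pupdate}) with the \emph{individual} advantage $\tilde{A}_n$ ``satisfies condition 2),'' concludes that each DBS converges to a strategy that is locally optimal given the others' strategies, and composes these per-agent optima into a local optimum of (\ref{opt}) via the sequential maximization (\ref{opttrans}) together with the independence of the DBSs' strategies. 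You instead work with the joint gradient of $\overline{G}$ and attempt to prove that substituting $\tilde{A}_n$ for $A=\sum_{m}\tilde{A}_m$ is unbiased, treating the cross terms $\tilde{A}_m$, $m\neq n$, as action-independent baselines annihilated by the score-function identity. This is more explicit about where the difficulty sits: as you yourself note, $\tilde{A}_m$ contains the shared reward $r\left(\boldsymbol{a}_k\left|\boldsymbol{S}\right.\right)$, which depends on $a_{n,k}$, so the baseline argument does not go through verbatim---and this is precisely the step the paper passes over without comment. As for what each route buys: the paper's per-agent argument is shorter, but its final composition step (coordinate-wise local optimality implying joint local optimality of the non-convex objective) is its weak link; your route, if the unbiasedness (or an asymptotically vanishing bias dominated by the step sizes of condition 3) can be established, yields stationarity of the joint objective $\overline{G}$ directly, which is the stronger and cleaner conclusion.
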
 

\begin{proof} See Appendix A.
 \end{proof}
From Proposition 1, we can see that the proposed distributed VD-RL algorithm is guaranteed to converge to a local optimal solution of problem (\ref{opt}), by decomposing value $V\left(\boldsymbol{S}_k\right)$ to each DBSs. 

With regards to the complexity of the proposed VD-RL algorithm, one can see that only the successful service rate and estimated future reward, i.e. individual value, reached by each DBS at each step on their trajectories will be shared and transmitted among the DBSs. Thus, the VD-RL algorithm reduces the dimensionality of the problem by updating the policy and value functions at each DBS based only on this DBS's actions and states. Note that, the studied multi-agent problem is much more complex than a single agent problem, as the exponential growth on state space and the action space in multi-agent problem causes a curse of dimensionality\footnote{Curse of dimensionality means that the neural networks' estimation error increases when they deal with high dimensional data \cite{bellman1966dynamic}}. 
Nonetheless, our approach allows to reduce this multi-agent dimensionality to that of a single-agent problem. In essence, the complexity of the VD-RL solution is $\mathcal{O}\left(\upsilon \left(n_c+n_a\right) C \right)$, where $\upsilon$ is the iteration when the proposed VD-RL algorithm converged, while $n_c$ and $n_a$ are the number of elements in $\boldsymbol{\theta}_{c,n}$ and $\boldsymbol{\theta}_{a,n}$.  $C$ is the time complexity of calculating the gradient of each element in $\boldsymbol{\theta}_{c,n}$ and $\boldsymbol{\theta}_{a,n}$. This complexity is considerably low since, as already mentioned, it is similar to that of a single agent policy gradient algorithm. 

In summary,  using the VD-RL algorithm, the DBSs update their own policy and value functions and can finally reach a local optimal solution of problem (\ref{opt}). However, when the environment changes, the local optimal strategy reached by the VD-RL algorithm will no longer be the strategy that maximizes coverage. In this case, the whole procedure in Algorithm \ref{alg:VD-RL} must be performed again to solve problem (\ref{opt}). As the studied wireless environment is highly dynamic, the VD-RL algorithm must be repeatedly performed to search a team optimal strategy in every environment. This increases the complexity of finding a VD-RL solution to a time complexity $\mathcal{O}\left(\upsilon \left(n_c+n_a\right) C X\right)$, when the user access requests change $X$ times. In order to reduce this time complexity, we use the framework of meta-learning \cite{finn2017model}. In particular, we propose, a meta training procedure that can supplement the proposed algorithm by finding an initialization of the policy and value functions for the VD-RL solution. The meta-trained initialization are close to optimal policy and value functions at all possible environments which enables the VD-RL algorithm to quickly converge in dynamic environments.

\vspace{-0.3cm}
 \subsection{Meta Training Procedure}

  \begin{algorithm}[t]\footnotesize

\caption{Meta training procedure for the trajectory design problem. }

\label{alg:VD-MAMRL}   
\setlength{\abovecaptionskip}{-40pt} 
\setlength{\belowcaptionskip}{-40pt}
\begin{algorithmic} [1] 
\REQUIRE A distribution of user requests $p\left(\mathcal{Z}\right)$, user locations, time constraints. \\ 
\vspace{2pt}  
\ENSURE Initialize value functions $\tilde{V}_{\boldsymbol{\theta}^{\left(1\right)}_{c,n}}$, and policy functions $\boldsymbol{\pi}_{\boldsymbol{\theta}^{\left(1\right)}_{a,n}}$, for $n \in\mathcal{N}$.\\ 
\vspace{2pt}  
\FOR {Meta training epoch $i=1:I$} 
\vspace{2pt}  
\FOR {Environment sampling epoch $j=1:J$} 
\vspace{2pt}  
\STATE Sample an user request realization $\boldsymbol{z}_j\sim\mathcal{Z}$.
\vspace{2pt}  
\STATE Carry out an experience by generating a sequence of actions that are randomly selected based on current policy functions, i.e. $\boldsymbol{\pi}_{\boldsymbol{\theta}^{\left(i\right)}_{a,n}}$, for $n\in\mathcal{N}$.
\vspace{2pt}  
\STATE Calculate advantage $A$ based on (\ref{eq:tderror}).
\vspace{2pt}  
\FOR {Each DBS $n=1:N$} 
\vspace{2pt}  
\STATE Update individual value functions $\boldsymbol{\theta}^{\left(i\right)'}_{c,n,j}=\boldsymbol{\theta}^{\left(i\right)}_{c,n} + \alpha_c\nabla_{\boldsymbol{\theta}^{\left(i\right)}_{c,n}} \sum^K_{k=1} {A^2\left(\boldsymbol{a}^{\left(\boldsymbol{e}_{n,j}\right)}_{k}, {\boldsymbol{S}^{\left(\boldsymbol{e}_{n,j}\right)}_{k}}\right)}$ and policy functions $\boldsymbol{\theta}^{\left(i\right)'}_{a,n,j}=\boldsymbol{\theta}^{\left(i\right)}_{a,n}+ \alpha_a \sum^K_{k=1}\tilde {A}_n\left({a}^{\left(\boldsymbol{e}_{n,j}\right)}_{n,k}, {\boldsymbol{s}^{\left(\boldsymbol{e}_{n,j}\right)}_{n,k}}\right)\nabla_{\boldsymbol{\theta}^{\left(i\right)}_{a,n}}\log \pi_{\boldsymbol{\theta}^{\left(i\right)}_{a,n}}\left(a^{\left(\boldsymbol{e}_{n,j}\right)}_{n,k}\left| {\boldsymbol{s}^{\left(\boldsymbol{e}_{n,j}\right)}_{n,k}} \right.\right)$.
\vspace{2pt}  
\STATE Carry out an experience by generating a sequence of actions that are randomly selected based on updated policy functions, i.e. $\boldsymbol{\pi}_{\boldsymbol{\theta}^{\left(i\right)'}_{a,n,j}}$.
\vspace{2pt}  
\STATE Calculate loss $\tilde {L}_{c,n}\left(\tilde{V}_{\boldsymbol{\theta}^{\left(i\right)'}_{c,n,j}}, {\boldsymbol{z}_{j}}\right)$ and $\tilde {L}_{a,n}\left(\boldsymbol{\pi}_{\boldsymbol{\theta}^{\left(i\right)'}_{a,n,j}}, {\boldsymbol{z}_{j}}\right)$ with (\ref{eq:closs}) and (\ref{eq:ploss}).
\vspace{2pt}  
\ENDFOR
\vspace{2pt}  
\ENDFOR
\vspace{2pt} 
\FOR {Each DBS $n=1:N$} 
\vspace{2pt}  
\STATE Update initial value parameters $\boldsymbol{\theta}^{\left(i+1\right)}_{c,n} = \boldsymbol{\theta}^{\left(i\right)}_{c,n}-\beta\nabla_{\boldsymbol{\theta}^{\left(i\right)}_{c,n}}\sum_{\boldsymbol{z}_j \sim p\left(\mathcal{Z}\right)}\tilde {L}_{c,n}\left(\tilde{V}_{\boldsymbol{\theta}^{\left(i\right)'}_{c,n,j}}, {\boldsymbol{z}_{j}}\right)$, and policy parameters $\boldsymbol{\theta}^{\left(i+1\right)}_{a,n} = \boldsymbol{\theta}^{\left(i\right)}_{a,n}-\beta\nabla_{\boldsymbol{\theta}^{\left(i\right)}_{a,n}}\sum_{\boldsymbol{z}_j \sim p\left(\mathcal{Z}\right)}\tilde {L}_{a,n}\left(\boldsymbol{\pi}_{\boldsymbol{\theta}^{\left(i\right)'}_{a,n,j}}, {\boldsymbol{z}_{j}}\right)$.
\ENDFOR
\vspace{2pt}  
\ENDFOR
\vspace{2pt}  
\RETURN{ Optimal value functions $\tilde{V}_{\boldsymbol{\theta}^{*}_{c,n}}$, and policy functions $\boldsymbol{\pi}_{\boldsymbol{\theta}^{*}_{a,n}}$, for $n\in\mathcal{N}$}. 
\end{algorithmic}
\end{algorithm} 

Next, we introduce our meta-learning approach that meta trains a VD-RL solution using model agnostic meta-learning (MAML)\cite{finn2017model}. The proposed meta training method seeks a VD-RL solution capable of quickly reaching team optimal strategies in various environments. This meta training procedure prepares the VD-RL solution with a set of well established initial policy and value functions with suitable estimation on a distribution of user request realizations, i.e. $p\left(\mathcal{Z}\right)$.  These initial policy and value functions are close to the optimal ones that provide team optimal strategies to each of the user request realizations in $\mathcal{Z}$.
During the meta training procedure, a realization $\boldsymbol{z}_j$ is first sampled from $p\left(\mathcal{Z}\right)$. Then, the DBSs collect experiences $\boldsymbol{e}_{n,j}$ using their initial policy functions $\boldsymbol{\pi}_{\boldsymbol{\theta}^{\left(1\right)}_{a,n}}$, with the consideration that ground users are requesting access based on $\boldsymbol{z}_j$. The policy and value functions at the DBSs are updated using Algorithm \ref{alg:VD-RL}, based on the collected experiences $\boldsymbol{e}_{n,j}$. The updated policy and value functions at DBS $n$ within $\boldsymbol{z}_j$ are denoted as $\tilde{V}_{\boldsymbol{\theta}'_{c,n,j}}$, and $\boldsymbol{\pi}_{\boldsymbol{\theta}'_{a,n,j}}$, respectively. The updated policy and value functions are then tested on new experiences $\boldsymbol{e}'_{n,j}$ sampled using $\boldsymbol{\pi}_{\boldsymbol{\theta}'_{a,n,j}}$ at current realization $\boldsymbol{z}_j$. The feedback from such tests at DBS $n$ will be the values of loss functions that measure the distance from the updated policy and value functions to the optimal policy and value functions that produce team optimal strategies. These are given by
  \begin{equation}\label{eq:closs}
  \setlength{\abovedisplayskip}{3 pt}
\setlength{\belowdisplayskip}{3 pt}
\begin{split}
\tilde {L}_{c,n}\left(\tilde{V}_{\boldsymbol{\theta}'_{c,n,j}}, {\boldsymbol{z}_{j}}\right)= \sum^K_{k=1}\left(r\left(\boldsymbol{a}^{\left(\boldsymbol{e}'_{n,j}\right)}_k\left|\boldsymbol{S}^{\left(\boldsymbol{e}'_{n,j}\right)}_k\right.\right) + \gamma  \tilde{V}_{\boldsymbol{\theta}'_{c,n,j}}\left(\boldsymbol{s}^{\left(\boldsymbol{e}'_{n,j}\right)}_{n,k+1}\right)- \tilde{V}_{\boldsymbol{\theta}'_{c,n,j}}\left(\boldsymbol{s}^{\left(\boldsymbol{e}'_{n,j}\right)}_{n,k}\right)\right)^2,
\end{split}
\end{equation}
 \begin{equation}\label{eq:ploss}
\begin{split}
\tilde {L}_{a,n}\left(\boldsymbol{\pi}_{\boldsymbol{\theta}'_{a,n,j}}, {\boldsymbol{z}_{j}}\right)=\sum^K_{k=1}&\left(r\left(\boldsymbol{a}^{\left(\boldsymbol{e}'_{n,j}\right)}_k\left|\boldsymbol{S}^{\left(\boldsymbol{e}'_{n,j}\right)}_k\right.\right)+ \gamma \tilde{V}_{\boldsymbol{\theta}'_{c,n,j}}\left(\boldsymbol{s}^{\left(\boldsymbol{e}'_{n,j}\right)}_{n,k+1}\right)- \tilde{V}_{\boldsymbol{\theta}'_{c,n,j}}\left(\boldsymbol{s}^{\left(\boldsymbol{e}'_{n,j}\right)}_{n,k}\right)\right)\\
&\log \pi_{\boldsymbol{\theta}'_{a,n,j}}\left(a^{\left(\boldsymbol{e}'_{n,j}\right)}_{n,k}\left| {\boldsymbol{s}^{\left(\boldsymbol{e}'_{n,j}\right)}_{n,k}} \right.\right),
\end{split}
\end{equation}
where $\boldsymbol{a}^{\left(\boldsymbol{e}'_{n,j}\right)}_k$ and $\boldsymbol{S}^{\left(\boldsymbol{e}'_{n,j}\right)}_k$ are, respectively, the $k$-th action and state of DBS $n$ within experience $\boldsymbol{e}'_{n,j}$. By minimizing the loss functions in (\ref{eq:closs}) and (\ref{eq:ploss}), 
the proposed meta training method updates the policy and value function parameters toward the parameters of optimal policy and value functions in various user access request realizations sampled from $p\left(\mathcal{Z}\right)$. More concretely, the objective of the meta training procedure is
 \begin{equation}\label{eq:metaobj}
\begin{split}
\min_{\boldsymbol{\theta}_{c}, \boldsymbol{\theta}_{a}}&\sum_{\boldsymbol{z}_j \sim p\left(\mathcal{Z}\right)}\sum^N_{n=1}\tilde {L}_{c,n}\left(\tilde{V}_{\boldsymbol{\theta}'_{c,n,j}}, {\boldsymbol{z}_{j}}\right)+\tilde {L}_{a,n}\left(\boldsymbol{\pi}_{\boldsymbol{\theta}'_{a,n,j}}, {\boldsymbol{z}_{j}}\right),
\end{split}
\end{equation}
where $\boldsymbol{\theta}_{c}=\left[\boldsymbol{\theta}_{c,1}, \boldsymbol{\theta}_{c,2},\ldots, \boldsymbol{\theta}_{c,N}\right]$ is the vector of value function parameters at all DBSs. {\small$\boldsymbol{\theta}'_{c,n,j}=\boldsymbol{\theta}_{c,n} + \alpha_c\nabla_{\boldsymbol{\theta}_{c,n}} \sum^K_{k=1} {A^2\left(\boldsymbol{a}^{\left(\boldsymbol{e}_{n,j}\right)}_{k}, {\boldsymbol{S}^{\left(\boldsymbol{e}_{n,j}\right)}_{k}}\right)}$} is the updated value function parameters of DBS $n$, based on experience $\boldsymbol{e}_{n,j}$. {\small$\boldsymbol{\theta}'_{a,n,j}={\theta _{a,n}} + {\alpha _a}\sum\limits_{k = 1}^K {{{\tilde A}_n}} \left( {a_{n,k}^{\left( {{{\boldsymbol{e}}_{n,j}}} \right)},{\boldsymbol{s}}_{n,k}^{\left( {{{\boldsymbol{e}}_{n,j}}} \right)}} \right){\nabla _{{\theta _{a,n}}}}\log {\pi _{{\theta _{a,n}}}}\left( {a_{n,k}^{\left( {{{\boldsymbol{e}}_{n,j}}} \right)}\left| {{\boldsymbol{s}}_{n,k}^{\left( {{{\boldsymbol{e}}_{n,j}}} \right)}} \right.} \right)$} is the updated policy function parameter of DBS $n$, based on experience $\boldsymbol{e}_{n,j}$.
Here, we can see that the optimization variables in (\ref{eq:metaobj}) are initial policy and value function parameters $\boldsymbol{\theta}_{c}$ and $\boldsymbol{\theta}_{a}$.  That is, the meta-learning approach seeks to find the optimal initial function parameters of the VD-RL algorithm so as to minimize the estimation errors captured in (\ref{eq:closs}) and (\ref{eq:ploss}),  for different environments that can be sampled from $p\left(\mathcal{Z}\right)$. This parameter initialization, i.e. the meta-trained policy and value functions, will make it easier for the VD-RL solution to find the team optimal strategies in unseen environments.

\begin{figure}
  \centering
  \includegraphics[width=9 cm]{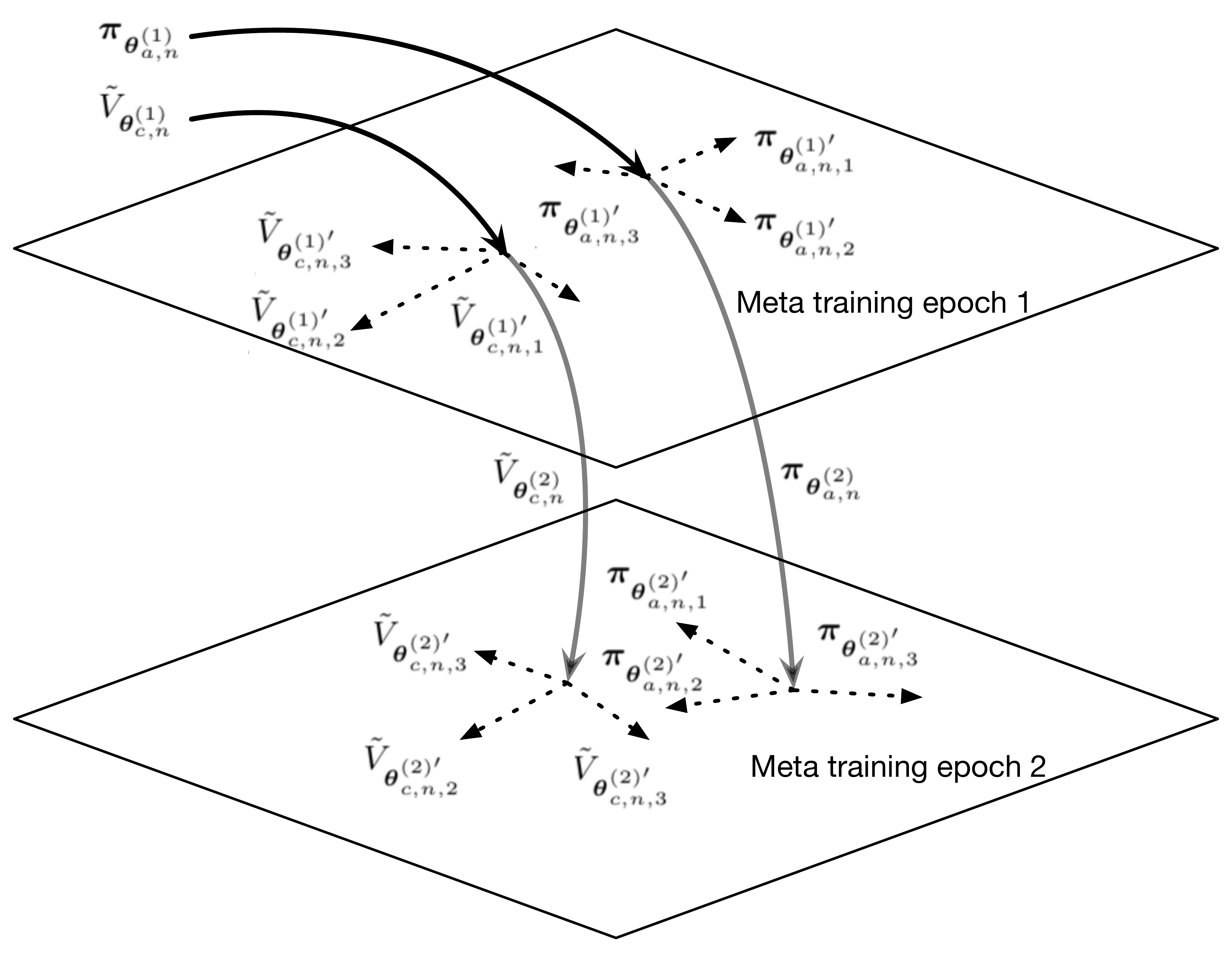}
  \caption{\footnotesize{Illustration of the proposed meta-training method at DBS $n$ with $J=3$. That is, $3$ different user requests are drawn from $p\left(\mathcal{Z}\right)$ at each training episode. $\tilde{V}_{\boldsymbol{\theta}^{\left(1\right)'}_{c,n,1}}$ and $\boldsymbol{\pi}_{\boldsymbol{\theta}^{\left(1\right)'}_{a,n,1}}$ are the updated policy and value functions in the first sampled environment,  }}
  \label{Fig. 3}
  \centering
  \vspace{-0.7cm}
\end{figure}

To solve problem (\ref{eq:metaobj}), a standard stochastic gradient descent method is applied. Fig. \ref{Fig. 3} shows one iteration for solving problem (\ref{eq:metaobj}). For this example, at each meta training iteration, the meta training method performs one step stochastic gradient descent on (\ref{eq:metaobj}) over the initial policy and value function parameters $\boldsymbol{\theta}_{c,n}$ and $\boldsymbol{\theta}_{a,n}$ based on sampled experiences in $J$ different environments. The full procedure of using meta training method with VD-RL is summarized in Algorithm \ref{alg:VD-MAMRL}. In this meta training procedure, the DBSs collect their experiences from serving $J$ different user requests drawn from $p\left(\mathcal{Z}\right)$, with initial value functions $\tilde{V}_{\boldsymbol{\theta}^{\left(1\right)}_{c,n}}$ and policy functions $\boldsymbol{\pi}_{\boldsymbol{\theta}^{\left(1\right)}_{a,n}}$. After serving one sampled user request, the DBSs execute one step VD-RL update and get new value functions $\tilde{V}_{\boldsymbol{\theta}^{\left(1\right)'}_{c,n,j}}$ and policy functions $\boldsymbol{\pi}_{\boldsymbol{\theta}^{\left(1\right)'}_{a,n,j}}$. The updated policy and value functions are evaluated by the loss functions defined in (\ref{eq:closs}) and (\ref{eq:ploss}), based on experiences collected with strategy $\boldsymbol{\pi}_{\boldsymbol{\theta}^{\left(1\right)'}_{a,n,j}}$. The meta training procedure in Algorithm \ref{alg:VD-MAMRL} seeks to finding a set of initial policy and value functions that are close to the optimal policy and value functions at every user request realization. Equipped with this initialization, the VD-RL can now find a local optimal solution of problem (\ref{opt}) in unseen environment using a small number of update steps in Algorithm \ref{alg:VD-RL}. Thus, the meta-training method minimizes the losses that are collected from the sampled user request realizations, by updating the value and policy parameters, i.e. $\boldsymbol{\theta}^{\left(1\right)}_{c,n}$, $\boldsymbol{\theta}^{\left(1\right)}_{a,n}$, in the opposite direction of the gradient of the losses. After one step of this update, the value and policy parameters, i.e. $\boldsymbol{\theta}^{\left(2\right)}_{c,n}$, $\boldsymbol{\theta}^{\left(2\right)}_{a,n}$, will be closer to the optimal policy and value functions in the sampled environment. Subsequently, the DBSs will start serving user requests with value functions $\tilde{V}_{\boldsymbol{\theta}^{\left(2\right)}_{c,n}}$, and policy functions $\boldsymbol{\pi}_{\boldsymbol{\theta}^{\left(2\right)}_{a,n}}$. Then, the DBSs collect their experience from serving different user requests, based on which they can keep updating the policy and value functions, as shown in Fig. \ref{Fig. 3}, until convergence. Using our proposed meta training procedure, the DBSs finally find a VD-RL solution with optimal initializations of the policy and value function parameters, i.e. $\tilde{V}_{\boldsymbol{\theta}^{*}_{c,n}}$, and $\boldsymbol{\pi}_{\boldsymbol{\theta}^{*}_{a,n}}$. 
 The complexity of the meta training procedure is $\mathcal{O}\left(2\upsilon_m \left(n_c+n_a\right) CJ \right)$, with $\upsilon_m$ being the number of iterations that the meta training method needs for convergence. This complexity is considerably low, as the proposed meta training mechanism does not needs any additional neural networks in the training procedure. Also, since the DBSs can collect experiences on serving group users' daily requests, the proposed meta training procedure's complexity will not introduce additional costs in terms of time, energy, or DBS hardware.   

 \vspace{-0.3cm}
\section{Simulation Results and Analysis} 
{
For our simulations, we consider a scenario with five DBSs serving $U = 300$ mobile users. 
The main simulation parameters are listed in Table I. The results of the proposed VD-RL algorithm are compared with the independent actor critic algorithm \cite{foerster2017counterfactual} and the monotonic value function factorisation algorithm \cite{rashid2018qmix}. The independent actor critic algorithm  (IAC) updates one policy function and one value function for all of the agents. The monotonic value function factorisation algorithm, called Q mix, uses the same value and policy update procedure as our proposed VD-RL algorithm, but it estimates the summation in (\ref{eq:vd}) using a neural network.
 The results of the proposed meta-trained VD-RL  are further compared to the results from the pre-trained VD-RL algorithm\cite{5740583}, the original VD-RL algorithm, as well as oracle results. Within the pre-trained VD-RL algorithm, the VD-RL algorithm starts from the optimal policy and value functions at former tasks. Here, the pre-trained VD-RL algorithm trains a VD-RL within exactly the same environments sampled at our proposed meta training procedure.
For the original VD-RL algorithm, the DBSs start with randomly initialized policy and value functions. The oracle results present the DBSs' performance in the considered environment, using the team optimal strategies.  All statistical results are averaged over a large number of independent runs.


\begin{table}
 \vspace{-0.32cm}
  \newcommand{\tabincell}[2]{\begin{tabular}{@{}#1@{}}#2\end{tabular}}
\renewcommand\arraystretch{1}
 \caption{
    \vspace*{-0.2em}Simulation Parameters\cite{7037248}}\vspace*{-1em}
\centering  
\begin{tabular}{|c|c|c|c|}
\hline
\textbf{Parameter} & \textbf{Value} & \textbf{Parameter} & \textbf{Value} \\
\hline
$P_u $ & 20 dBm & $V_s$ & 30 m/s  \\
\hline
$N_0$ & -170 dBm/Hz & $B $ & 1 MHz\\
\hline
$\mu_{\textrm{LoS}}, $ & 1.6 & $\delta_{\textrm{LoS}}$ & 8.41\\
\hline
$\mu_{\textrm{NLoS}}, $ & 23 & $\delta_{\textrm{NLoS}}$ & 33.78\\
\hline

\end{tabular}
 \vspace{-0.5cm}
\end{table}

\begin{figure}[t]
\setlength{\abovecaptionskip}{-5pt} 
\setlength{\belowcaptionskip}{-8pt} 
  \begin{center}
   \vspace{0cm}
    \includegraphics[width=9.5cm]{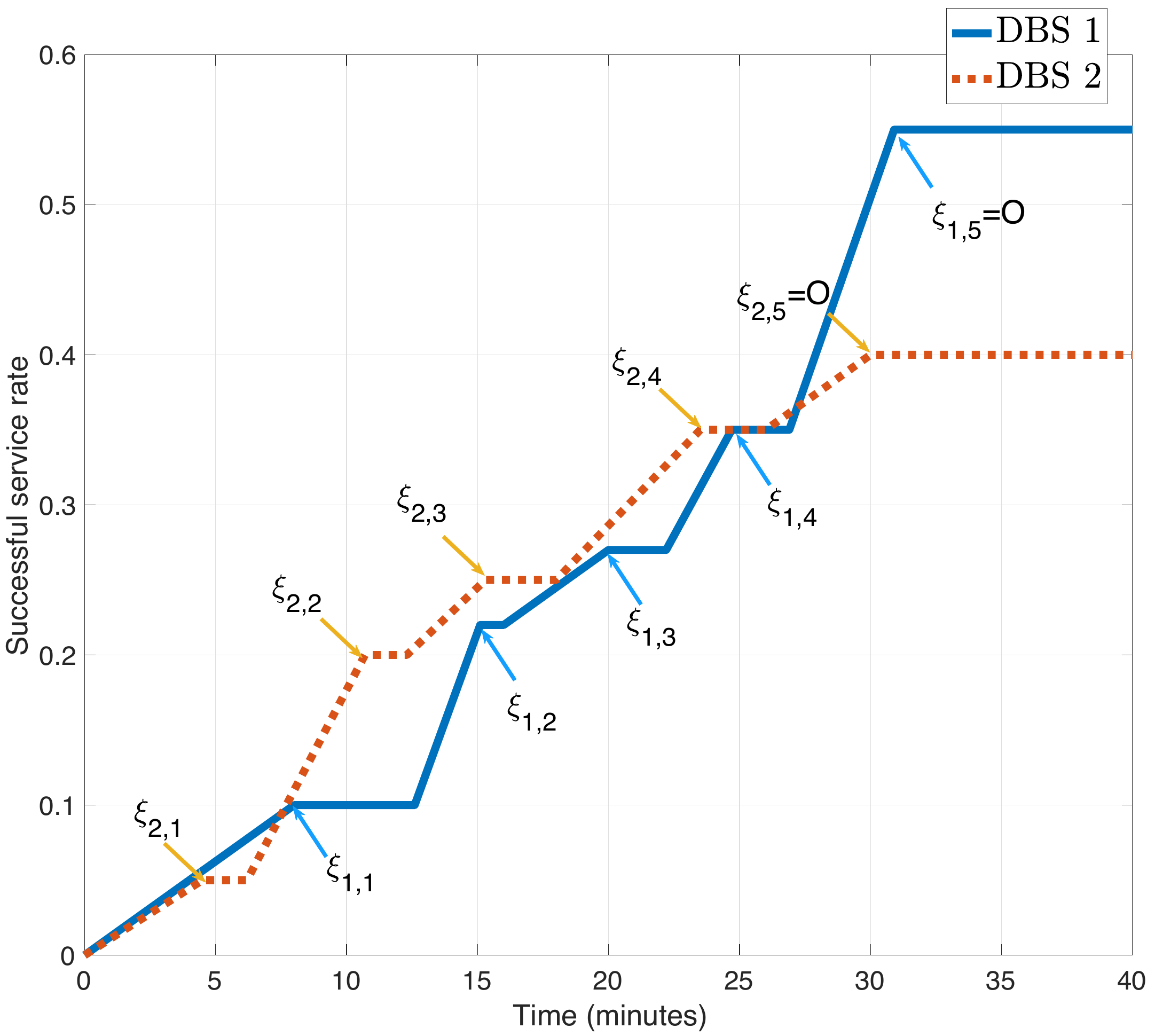}
    \vspace{-0.2cm}
    \caption{\label{VDRL4} Successful service rate achieved at each DBS over time. }
  \end{center}\vspace{-0.9cm}
\end{figure}
Fig. \ref{VDRL4} shows how successful service rates increase with time, in a network with two DBSs. From Fig. \ref{VDRL4}, we can see that the successful service rate achieved by each DBS increases when the DBS arrives at a new cluster $\xi_{n,k}$. 
Fig. \ref{VDRL4} also shows that the two considered DBSs spend different amount of time on flying across clusters and serving clusters. In other words, the DBSs operate asynchronously in the considered area. In particular, when DBS $2$ starts to serve its second cluster, DBS $1$ is still serving its first cluster.

\begin{figure}
\setlength{\belowcaptionskip}{-4pt}
\setlength{\abovecaptionskip}{-4pt}
\centering
\subfigure[]{
\label{subfig:a} 
\includegraphics[width=7.5 cm]{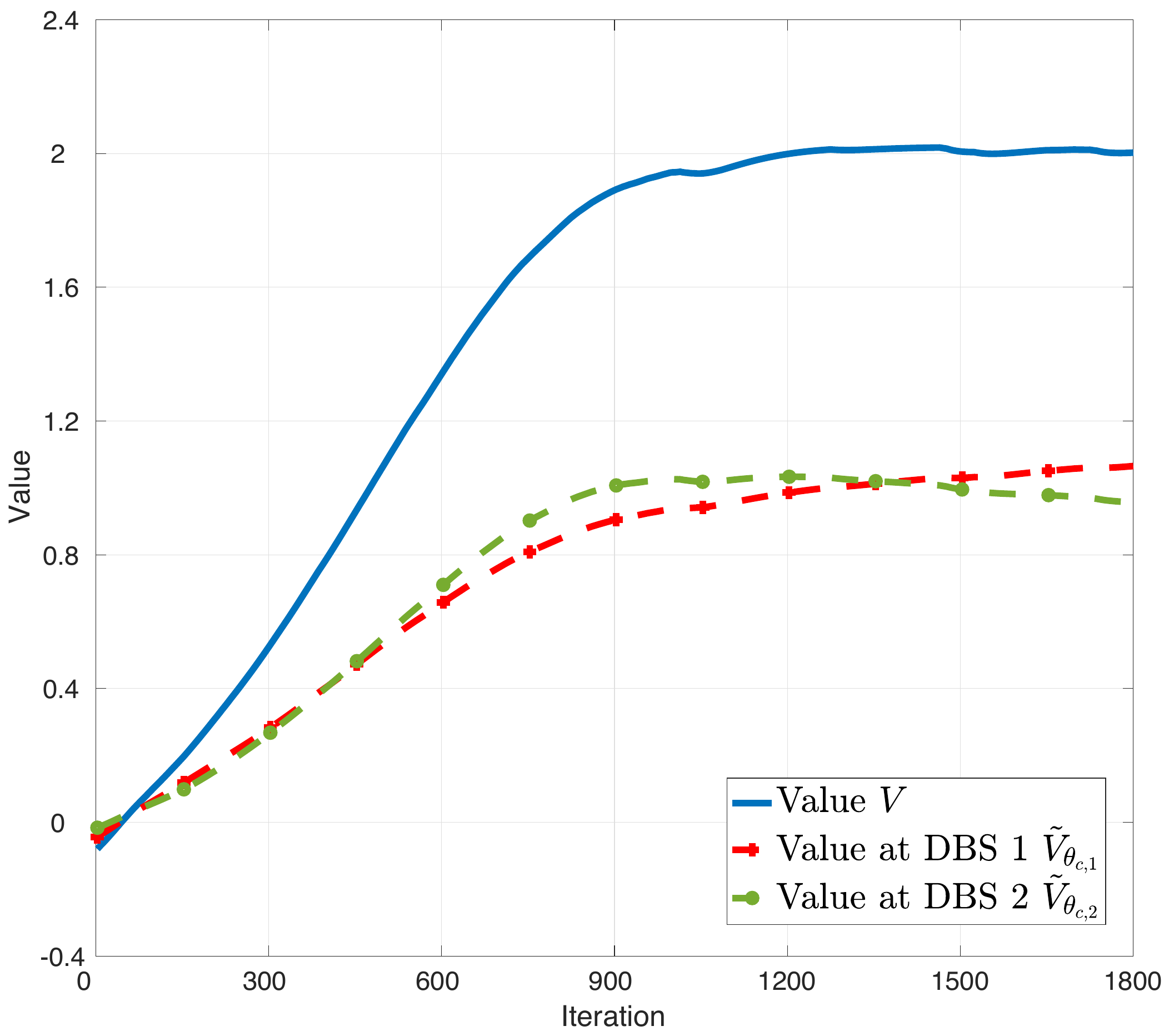}}
\subfigure[]{
\label{subfig:b} 
\includegraphics[width=7.5 cm]{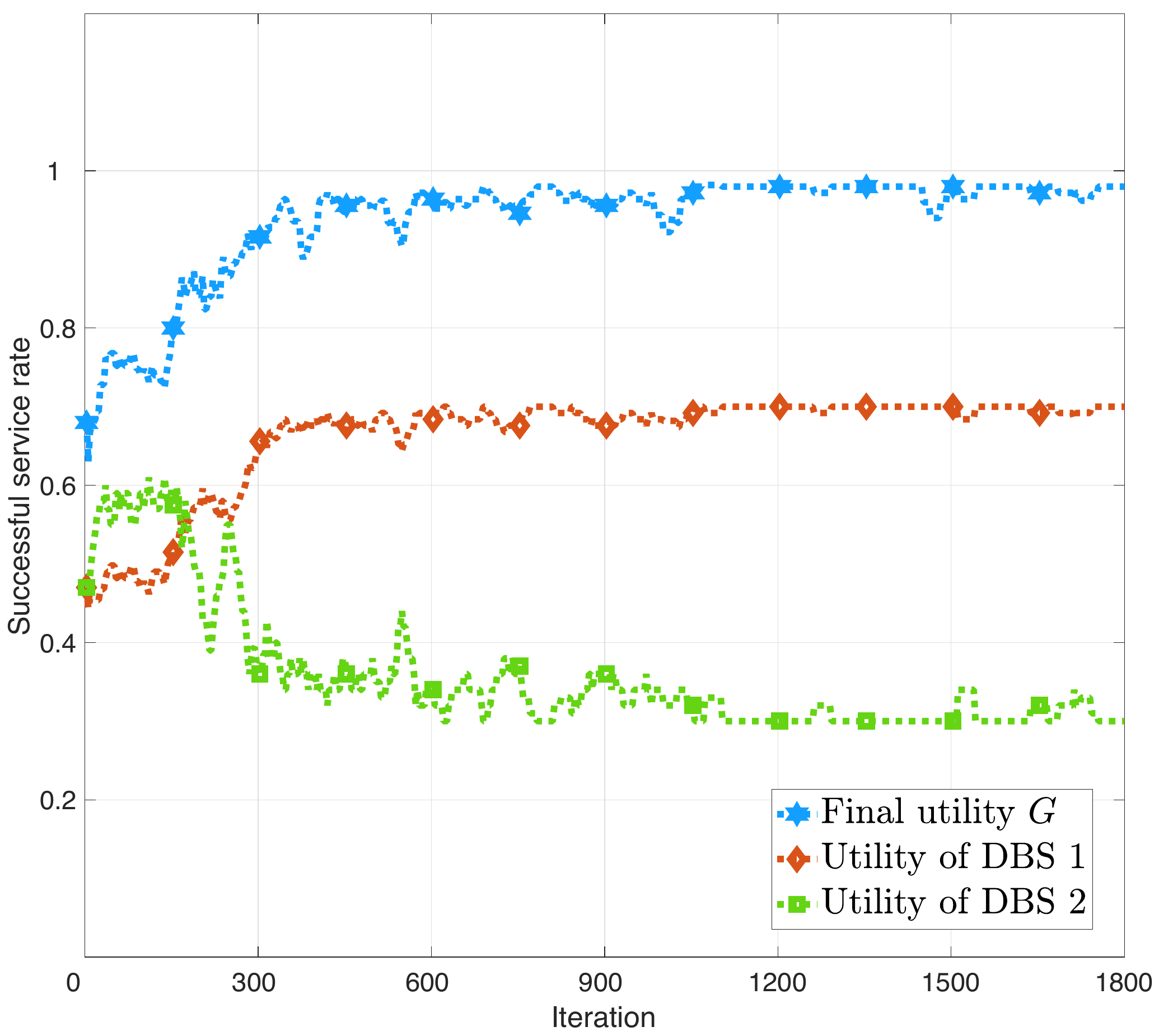}}
\caption{\footnotesize{The value decomposition learned by the proposed VD-RL algorithm.}}
\label{VDRL1} 
\vspace{-0.9cm}
\end{figure}

Fig. \ref{VDRL1} shows how the decomposition of the value function $V\left(\boldsymbol{S}_k\right)$ is learned using the proposed VD-RL algorithm. Here, the value of function $V\left(\boldsymbol{S}_0\right)$ is the estimation of the discounted future reward at the initial state $\boldsymbol{S}_0$. 
In Fig. \ref{VDRL1}, the value of function $V\left(\boldsymbol{S}_0\right)$, as well as the individual values of functions $\tilde V_{\boldsymbol{\theta}_{c,1}}\left(\boldsymbol{S}_0\right)$ and $\tilde V_{\boldsymbol{\theta}_{c,2}}\left(\boldsymbol{S}_0\right)$ follow the same trend of as the team utility of all DBSs. This implies that the individual value functions also estimate the team utility. With such values, the DBSs can independently find the strategies that maximize the team utility without sharing information such as actions and states. In particular, at the convergence of the VD-RL algorithm, DBS 2 sacrifices its own utility for team optimality. 
From Fig. \ref{VDRL1}, we can also observe that, with the VD-RL algorithm, the individual value at DBS 1 keeps increasing, yet the successful service rate at DBS 1 increases at first and then decreases while reaching convergence. The sum of individual values at two DBSs always equals to the value of $V\left(\boldsymbol{S}_k\right)$, which follows the trend of the team utility. In other words, the VD-RL algorithm learns to decompose the value function into two individual values, each of which can direct each DBS to independently finds its strategy that maximizes the team utility, instead of its individual utility.
 \begin{figure}
\centering
\subfigure[]{
\label{subfig:a} 
\includegraphics[width=7.5 cm]{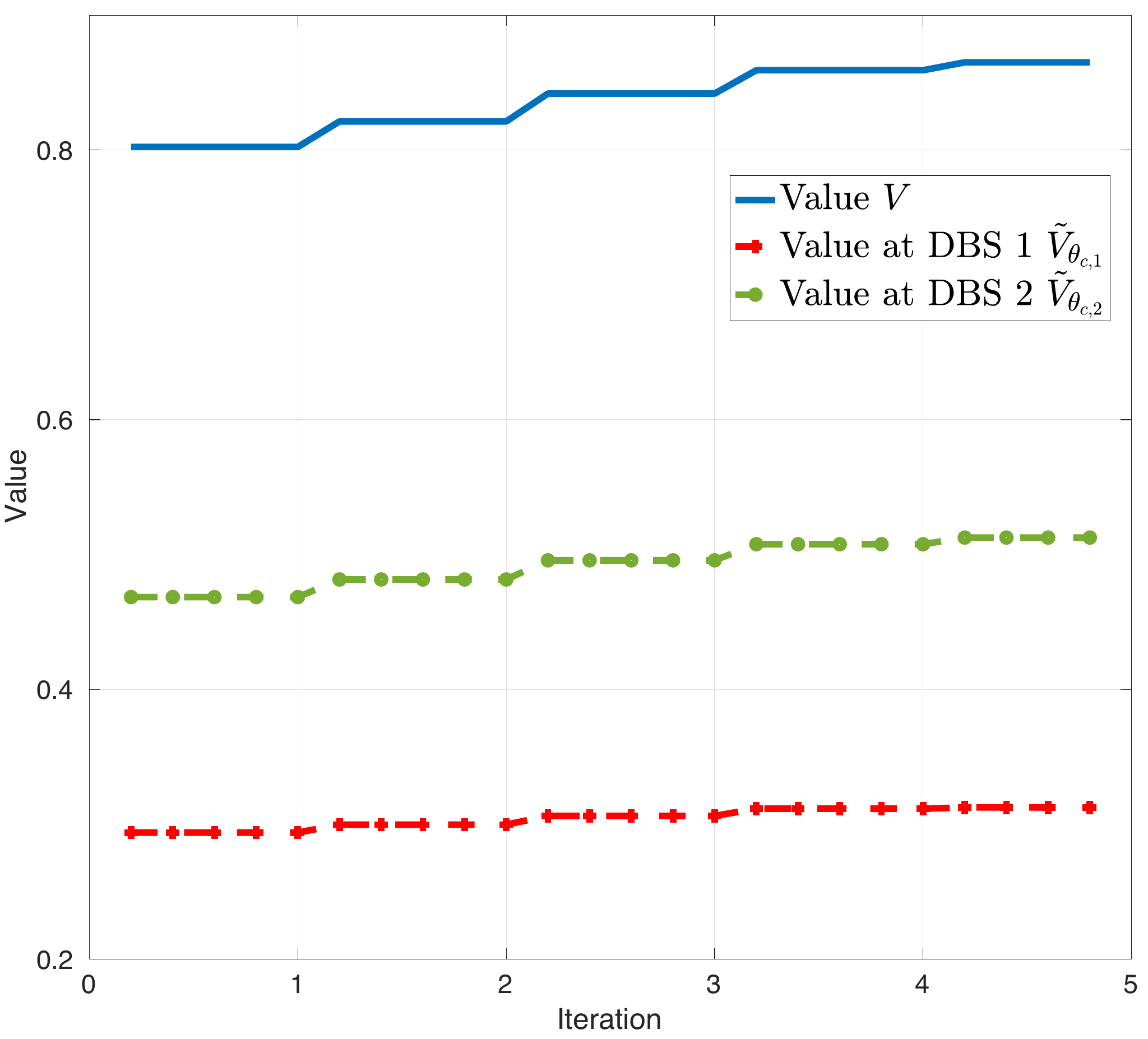}}
\subfigure[]{
\label{subfig:b} 
\includegraphics[width=7.5 cm]{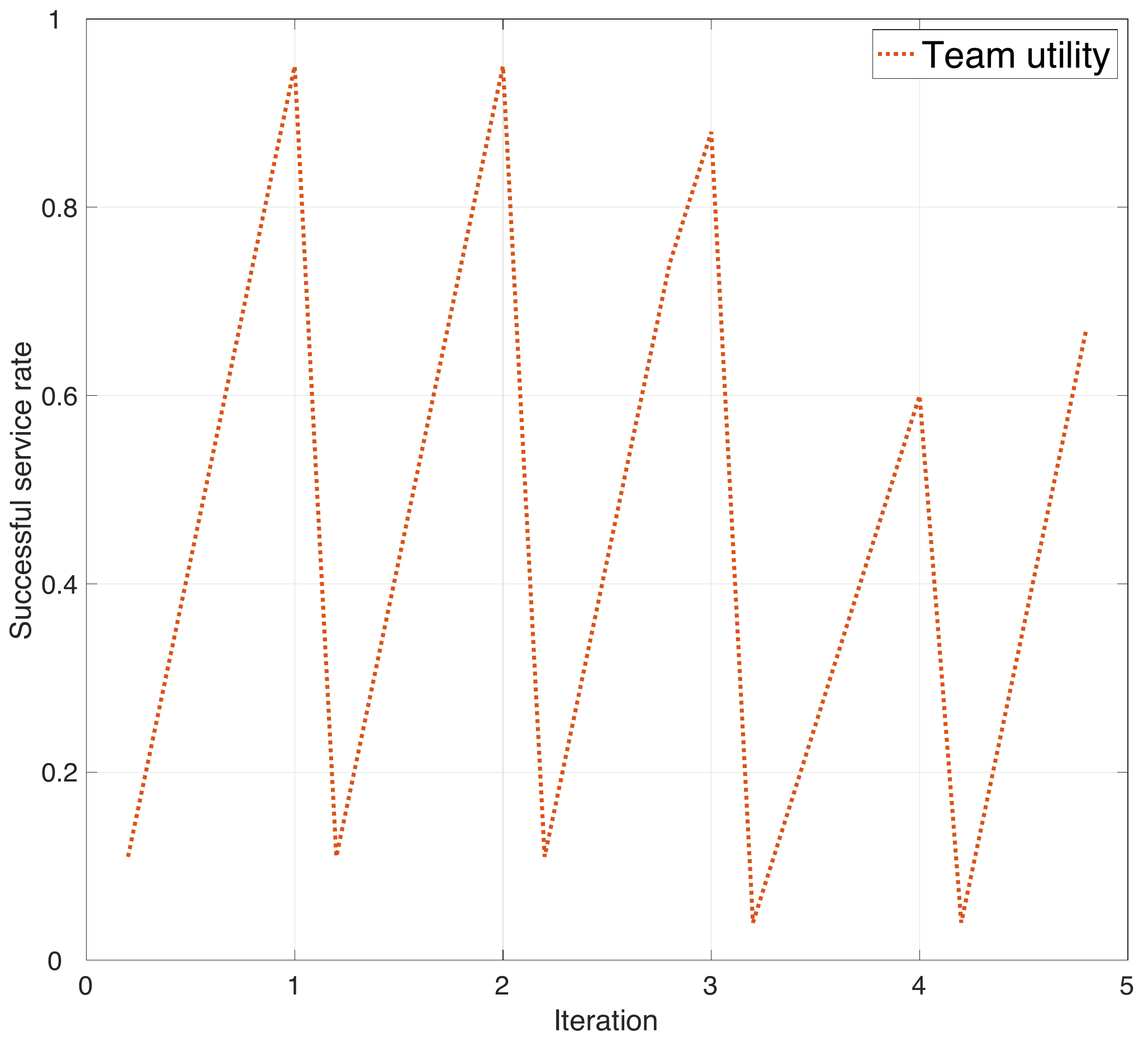}}
\caption{\footnotesize{Value decomposition within $5$ VD-RL iteration.}}
\label{VDRL2} 
\vspace{-0.6cm}
\end{figure}

Fig. \ref{VDRL2} shows the value decomposition within $5$ VD-RL training iterations. In this figure, we can see that the value of $V\left(\boldsymbol{S}_0\right)$ increases at the first step within each VD-RL training iteration. This is because, function $V\left(\boldsymbol{S}_0\right)$ estimates the cumulative future team reward that the DBSs can get at the initial state. Thus, the DBSs update value function $V\left(\boldsymbol{S}_0\right)$ using the rewards that they achieved from serving the ground users at the last training iteration, once they return to the origin.  
Fig. \ref{VDRL2} also shows that the decomposed individual value $\tilde V_{\boldsymbol{\theta}_{c,2}}\left(\boldsymbol{S}_0\right)$ at DBS 2 is much larger than the individual value $\tilde V_{\boldsymbol{\theta}_{c,1}}\left(\boldsymbol{S}_0\right)$ at DBS 1.
 This is because DBS 2 selects the actions that are beneficial for the team utility. A large individual value $\tilde V_{\boldsymbol{\theta}_{c,2}}\left(\boldsymbol{S}_0\right)$ reinforces such actions to maintain a high team utility. However, DBS 1 does not select actions that improve the team utility.
Hence, the individual value $\tilde V_{\boldsymbol{\theta}_{c,1}}\left(\boldsymbol{S}_0\right)$ attributed to DBS 1 is small, which keeps DBS 1's strategy unchanged. DBS 1 will, thus, act ``greedily'' by selecting all possible actions with its current strategy, until it finds the optimal one.

\begin{figure}[t]
\setlength{\abovecaptionskip}{-5pt} 
\setlength{\belowcaptionskip}{-8pt} 
  \begin{center}
   \vspace{0cm}
    \includegraphics[width=9cm]{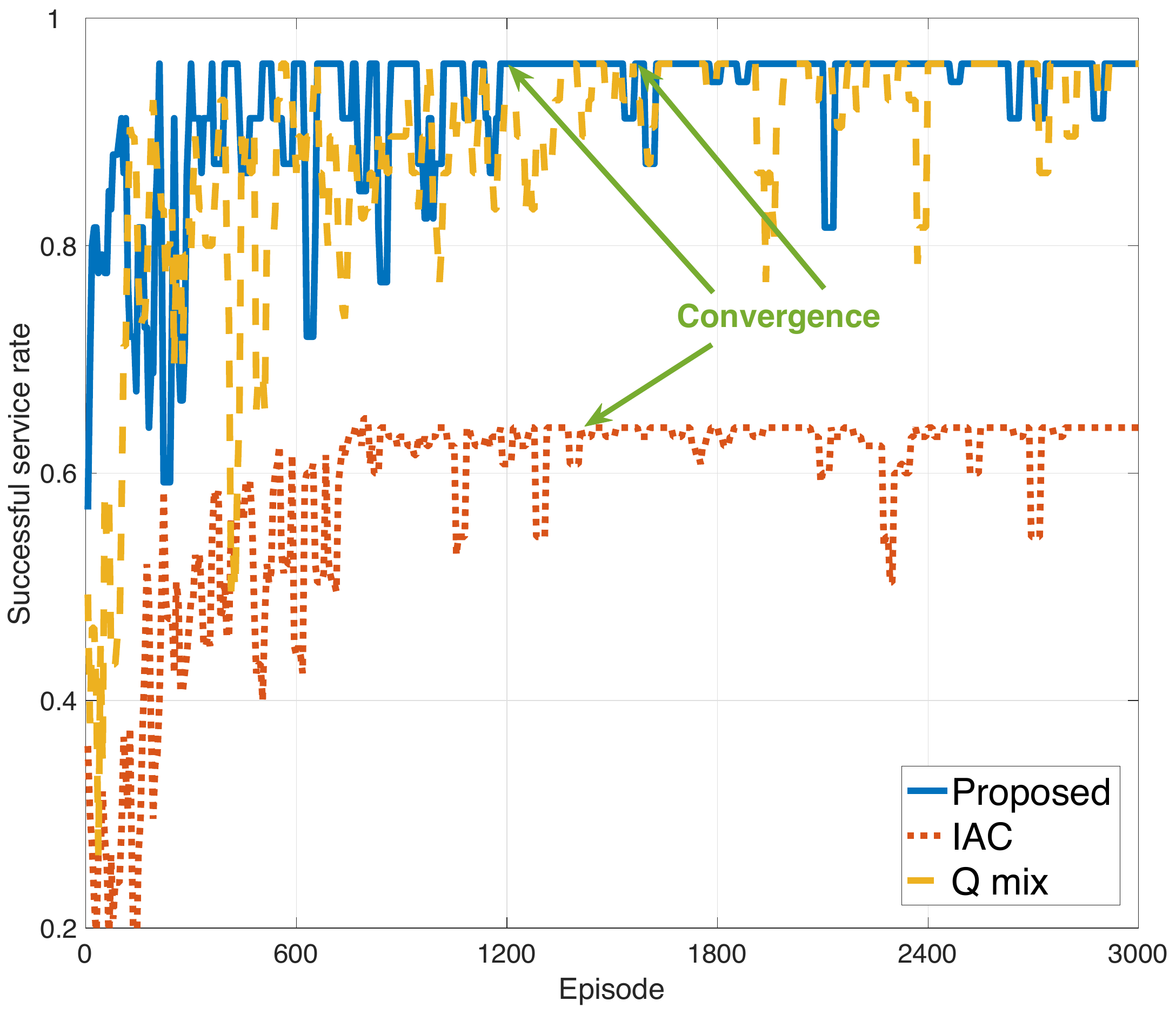}
    \vspace{-0.2cm}
    \caption{\label{VDRL3} Convergence of all considered multi-agent RL algorithms. }
  \end{center}\vspace{-0.9cm}
\end{figure}

In Fig. \ref{VDRL3}, we show the convergence of the proposed VD-RL algorithm. In this figure, we can see that the proposed VD-RL algorithm requires approximately $1,300$ iterations to reach convergence, which improves the convergence speed by up to $30.6\%$ compared to the Q mix algorithm. This stems from the fact that the neural network used to estimate the summation in (\ref{eq:vd}) remarkably increases the complexity of Q mix. Meanwhile, Fig. \ref{VDRL3} also shows that the proposed VD-RL algorithm yields a $53.2\%$ higher final utility than IAC. This is because the VD-RL algorithm can find a team optimal strategy to maximize the team reward. The IAC algorithm, however, find a strategy that maximize the DBSs' individual utilities. From Fig. \ref{VDRL3}, we also observe that the proposed VD-RL algorithm has a similar convergence speed to the IAC algorithm. This stems from the fact that, in the VD-RL algorithm, each DBS updates the policy and value functions using its own experience, and, thus, the proposed VD-RL algorithm has a low time complexity that is comparable to a single agent RL algorithm.

\begin{figure}
\centering
\subfigure[]{
\label{subfig:a} 
\includegraphics[width=5cm]{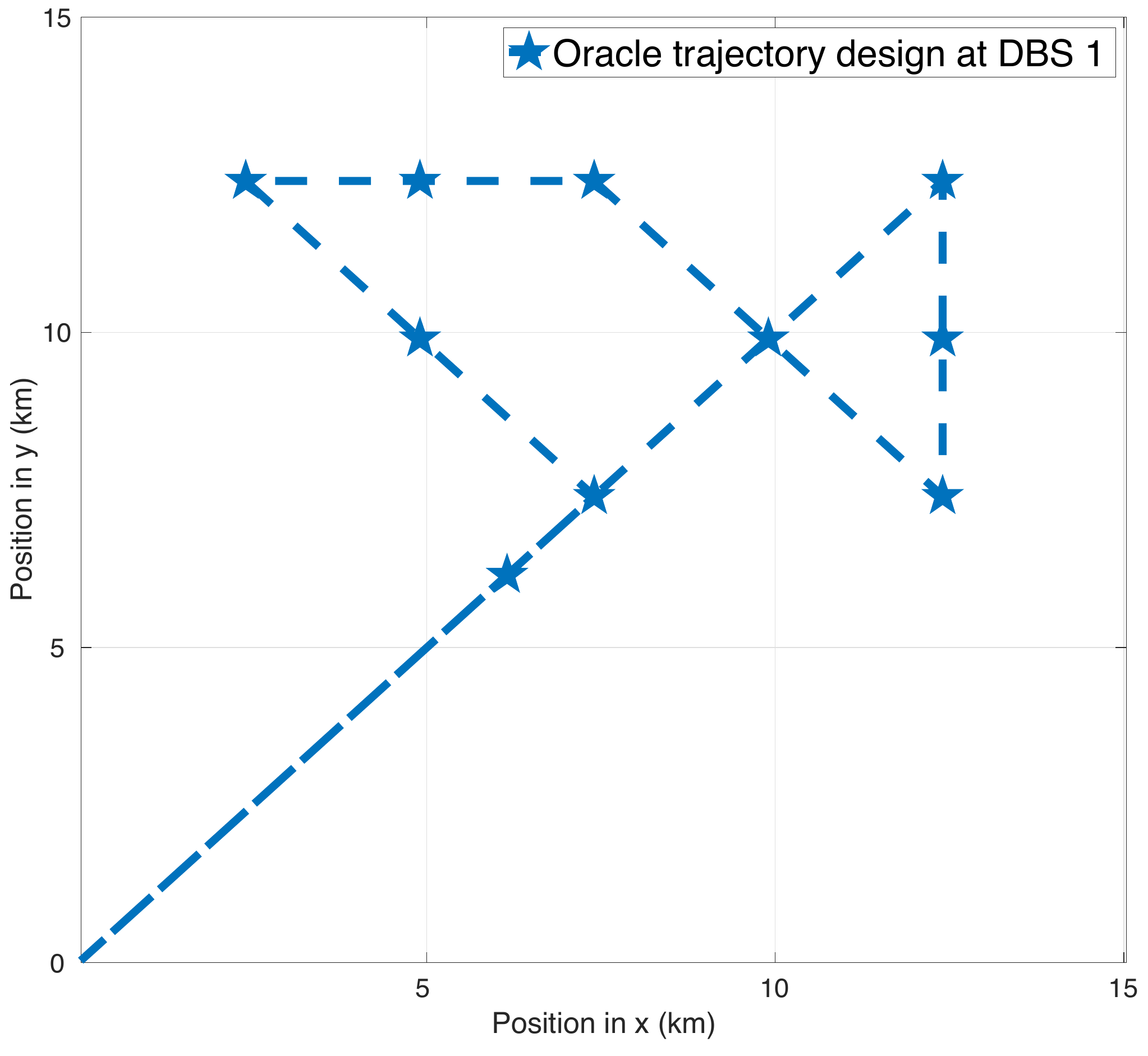}}
\subfigure[]{
\label{subfig:b} 
\includegraphics[width=5cm]{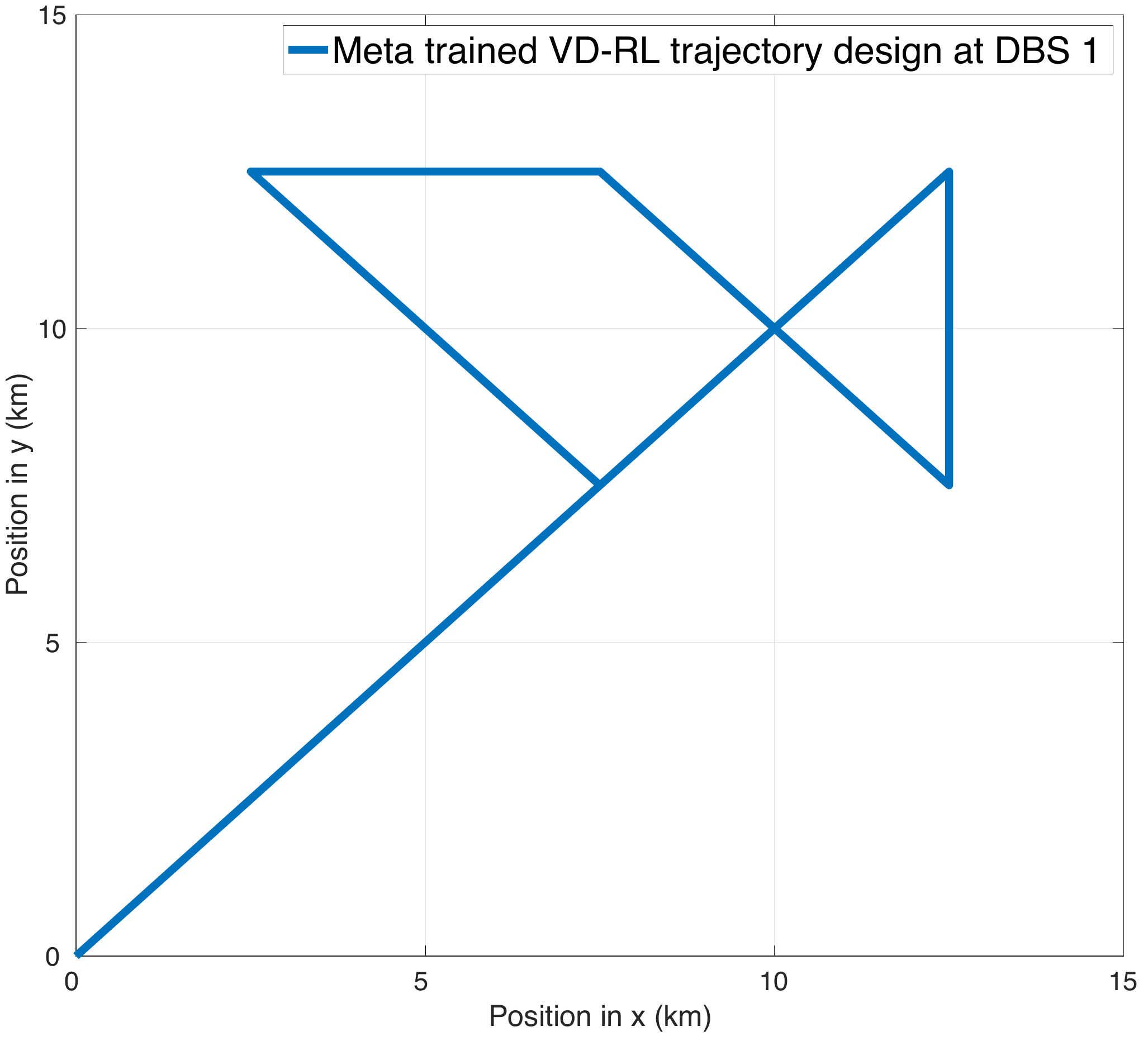}}
\subfigure[]{
\label{subfig:c} 
\includegraphics[width=5cm]{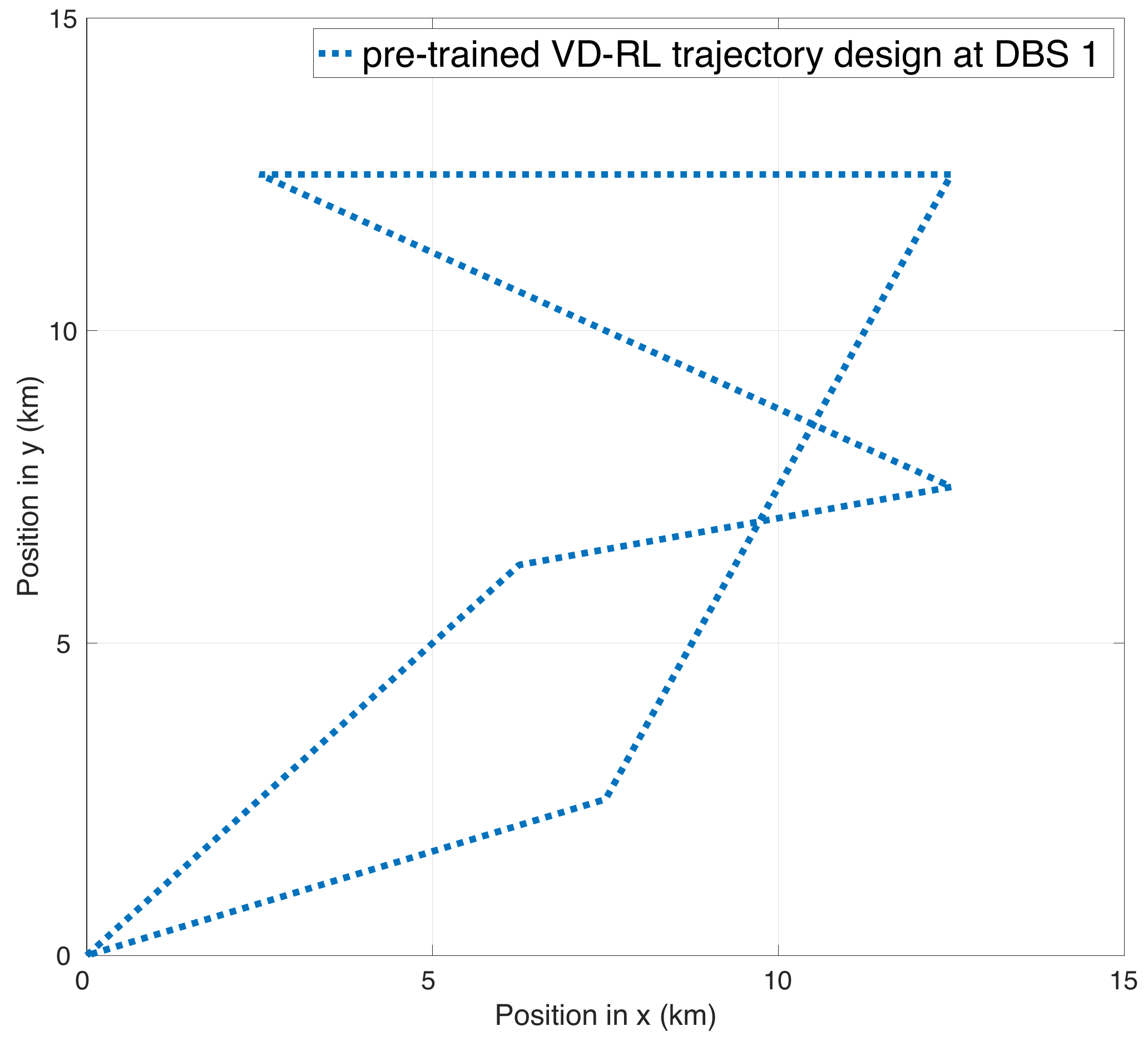}}
\subfigure[]{
\label{subfig:d} 
\includegraphics[width=5cm]{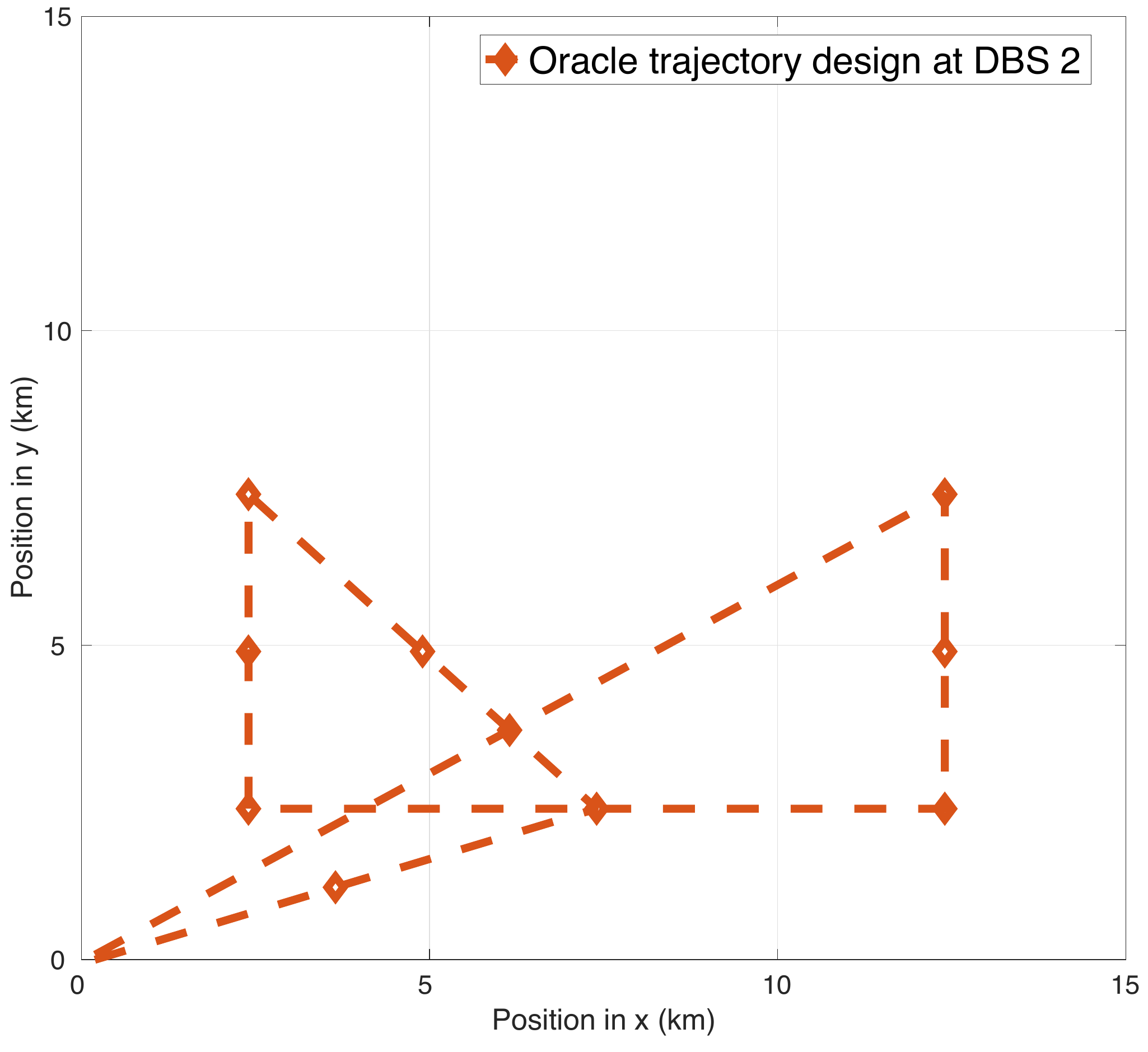}}
\subfigure[]{
\label{subfig:e} 
\includegraphics[width=5cm]{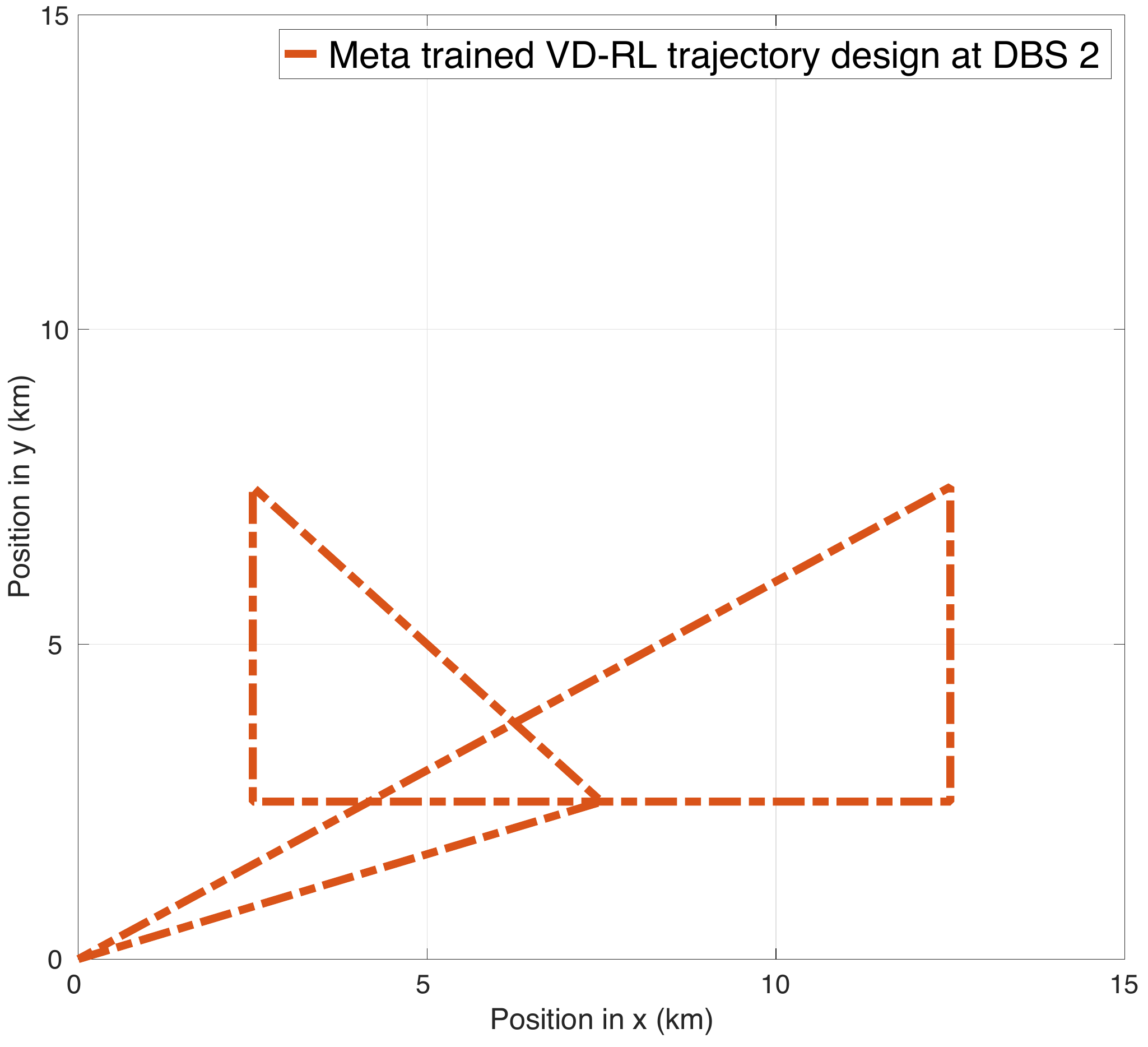}}
\subfigure[]{
\label{subfig:f} 
\includegraphics[width=5cm]{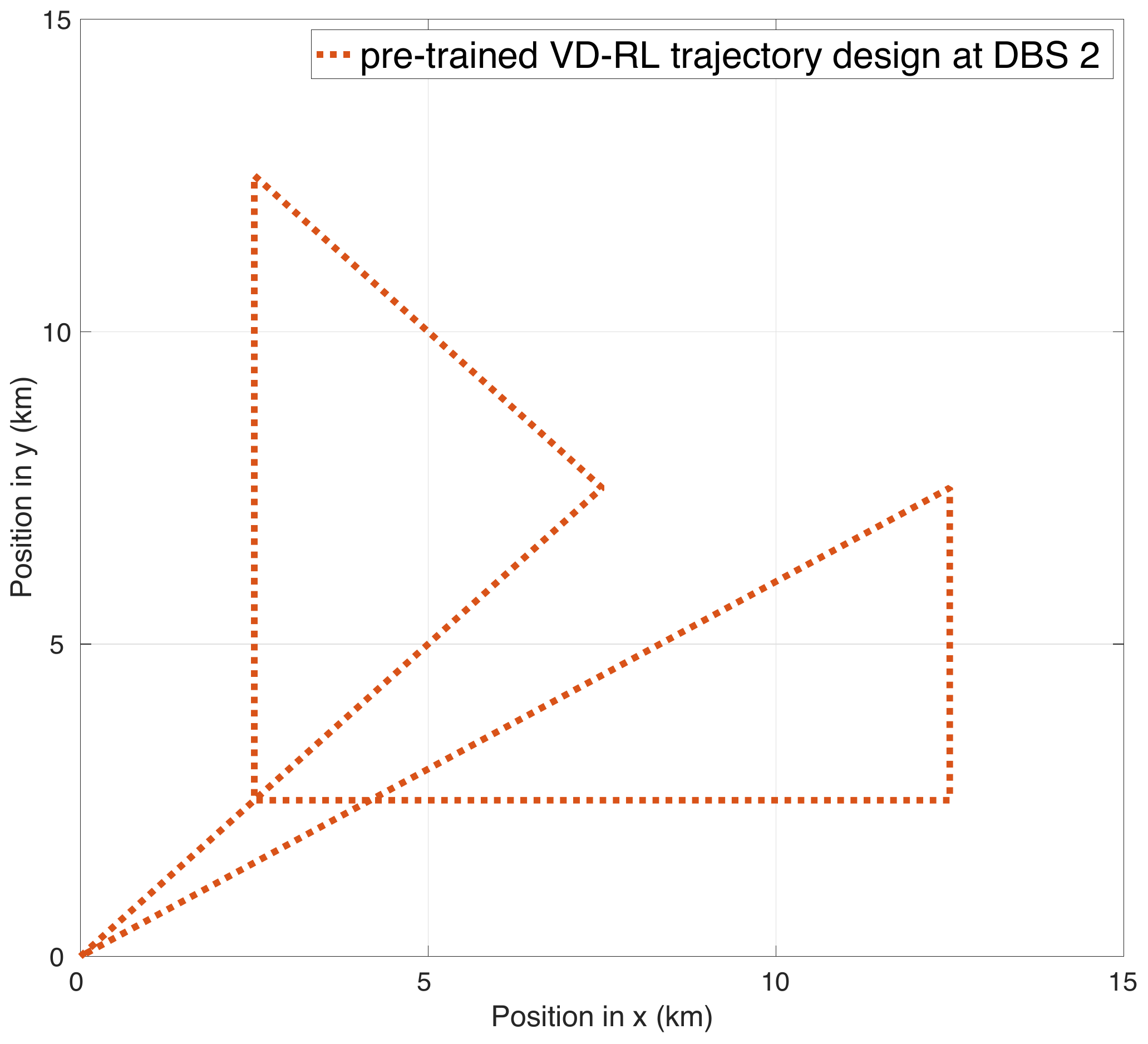}}
\caption{\footnotesize{Snapshots of trajectories resulting from all meta-trained and pre-trained algorithms in a sampled unseen environment.}}
\label{meta1} 
\end{figure}

Fig. \ref{meta1} shows a snapshot of the trajectories resulting from the proposed meta-trained VD-RL,  and the pre-trained VD-RL algorithm in an environment that was not experienced during the meta training procedure. Here, the trajectories in Figs. \ref{subfig:b}, \ref{subfig:c}, \ref{subfig:e}, and \ref{subfig:f} are selected, respectively, at the $100$-th iteration of both considered algorithms.
In this figure, we can see that, when faced with an unseen environment, the meta-trained VD-RL scheme can effectively find the optimal trajectories for the DBSs within a considerably small number of iterations. However, the trajectories resulting from the pre-trained VD-RL are still far from the optimal trajectories. Fig. \ref{meta1} also shows that the proposed meta-training method can adapt the DBSs in unseen environments much faster than the pre-trained VD-RL algorithm. This is because the meta-training method can find a policy and value function parameter initialization that is close to optimal policy and value functions at all possible user requests in $p\left(\mathcal{Z}\right)$. By using this meta-learning-based initialization, the meta-trained VD-RL can reach the optimal policy and value functions and find a team optimal strategy in an unseen environment within $p\left(\mathcal{Z}\right)$, using a small number of iterations.

\begin{figure}[t]
\setlength{\abovecaptionskip}{-5pt} 
\setlength{\belowcaptionskip}{-8pt} 
  \begin{center}
   \vspace{0cm}
    \includegraphics[width=9cm]{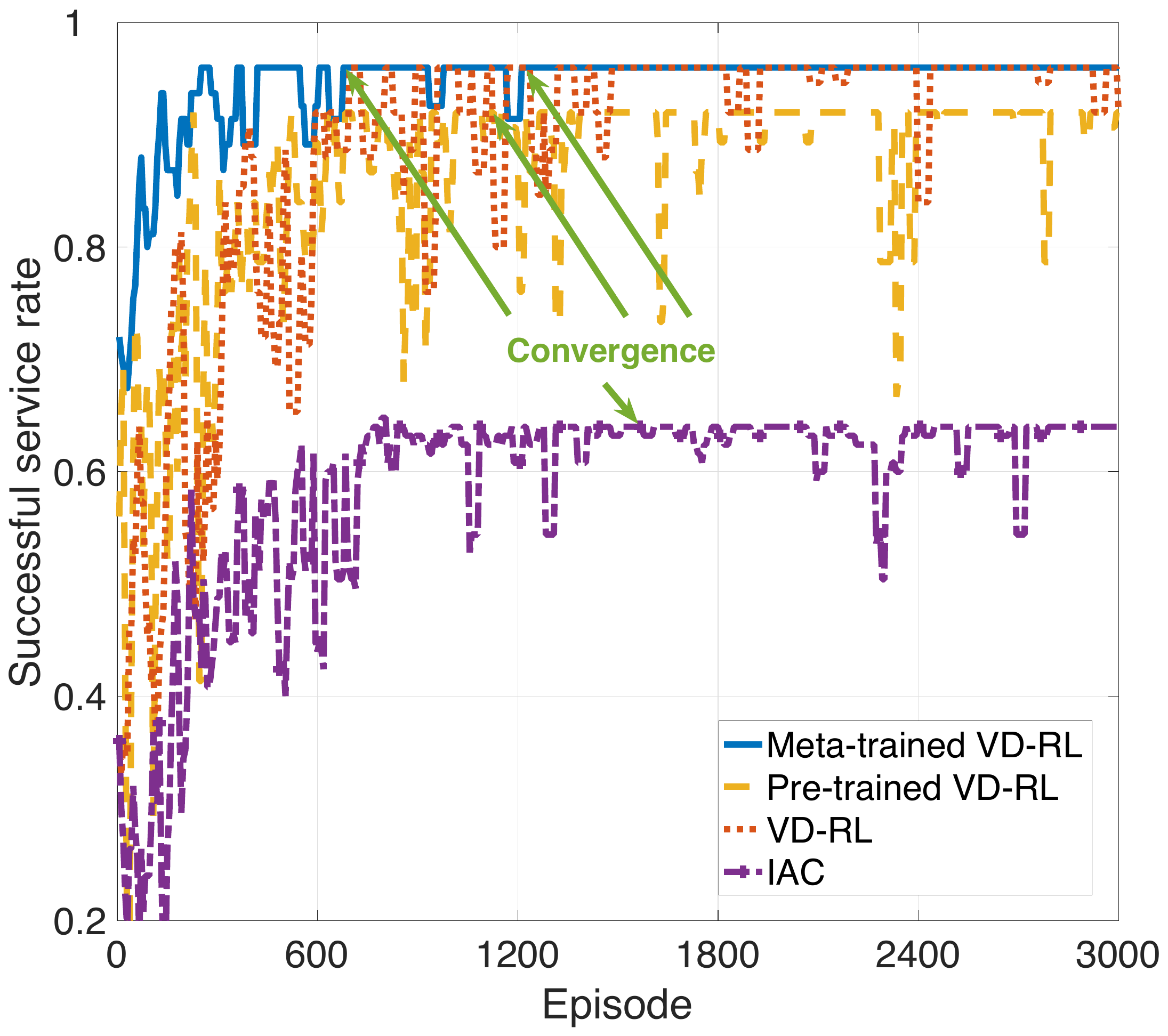}
    \vspace{-0.2cm}
    \caption{\label{meta2} Convergence of the meta-trained VD-RL. }
  \end{center}\vspace{-0.9cm}
\end{figure}

In Fig. \ref{meta2}, we show the convergence of the meta-trained VD-RL. From this figure, we can see that, with initial value functions $\tilde{V}_{\boldsymbol{\theta}^{*}_{c,n}}$ and policy functions $\boldsymbol{\pi}_{\boldsymbol{\theta}^{*}_{a,n}}$ provided by the meta training procedure in Algorithm \ref{alg:VD-MAMRL}, the meta-trained VD-RL converges at approximately the $700$-th iteration, which is $36.4\%$, and $53.8\%$ faster than the pre-trained VD-RL algorithm and the original VD-RL algorithm. Fig. \ref{meta2} also shows that the meta-trained VD-RL achieves a successful service rate that is equal to the one reached by the original VD-RL algorithm. The successful service rate achieved by the meta-training method is $9.2\%$ and $53.2\%$ higher than the one achieved by, respectively the pre-trained VD-RL algorithm and the IAC algorithm. 
This gain stems from the fact that the meta-training method trains a set of policy and value functions with proper estimation on various user access requests. By starting the policy updating procedure from such initialization, the DBSs can find a team optimal strategy within a small number of policy update steps. The pre-trained VD-RL, however, converges to a much lower successful service rate, since it could make the DBSs to start their policy updating procedure from the initializations that are far from the optimal policy and value functions.

\begin{figure}[t]
  \begin{center}
   \vspace{0cm}
    \includegraphics[width=9 cm]{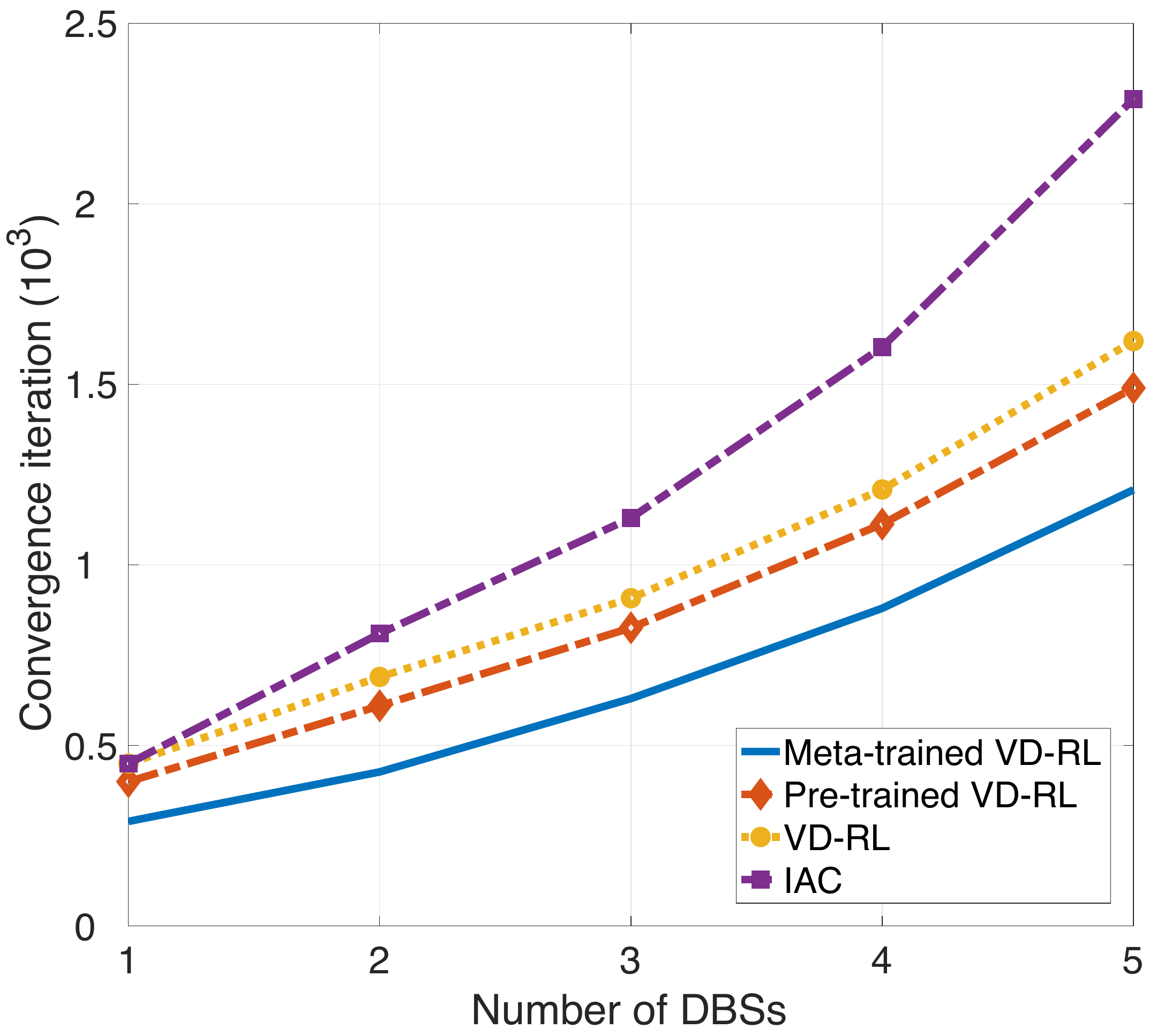}
    \vspace{-0.2cm}
    \caption{\label{meta3} Convergence of the meta-trained VD-RL algorithm as the number of DBSs varies. }
  \end{center}\vspace{-0.9cm}
\end{figure}

Fig. \ref{meta3} shows the convergence of the proposed meta-training method as the number of DBSs varies. Fig.  \ref{meta3} shows that, as the number of DBSs increases, the number of iterations needed for convergence increases because of the associated growth in the size of the action and state spaces.
From Fig.  \ref{meta3}, we observe that, as the number of DBSs increases, the number of iterations needed for convergence will grow at a higher speed. 
This is because the action and state spaces within the problem increase exponentially with the number of DBSs. This is consistent with the complexity analysis in Section III. A. Moreover, the number of iterations that the proposed VD-RL algorithm needs for convergence increases much slower compared to the traditional IAC algorithm. This is because the VD-RL algorithm reduces the dimensionality of the action and state spaces in the considered problem. In particular, the VD-RL algorithm enables each DBS to update its strategy using its own actions and states, thus simplifying the considered problem.
The convergence speed of the meta-trained VD-RL scheme decreases relatively slower compared to the original VD-RL algorithm, since the initializations of the policy and value function parameters in the meta-trained VD-RL algorithm have optimized performance across diverse user access request realizations. Moreover, the pre-trained VD-RL algorithm only slightly improves the VD-RL algorithm's convergence speed, as the initialization provided by the pre-trained VD-RL algorithm could be far from the optimal policy and value functions in the unseen environments.

\begin{figure}[t]
  \begin{center}
   \vspace{0cm}
    \includegraphics[width=9 cm]{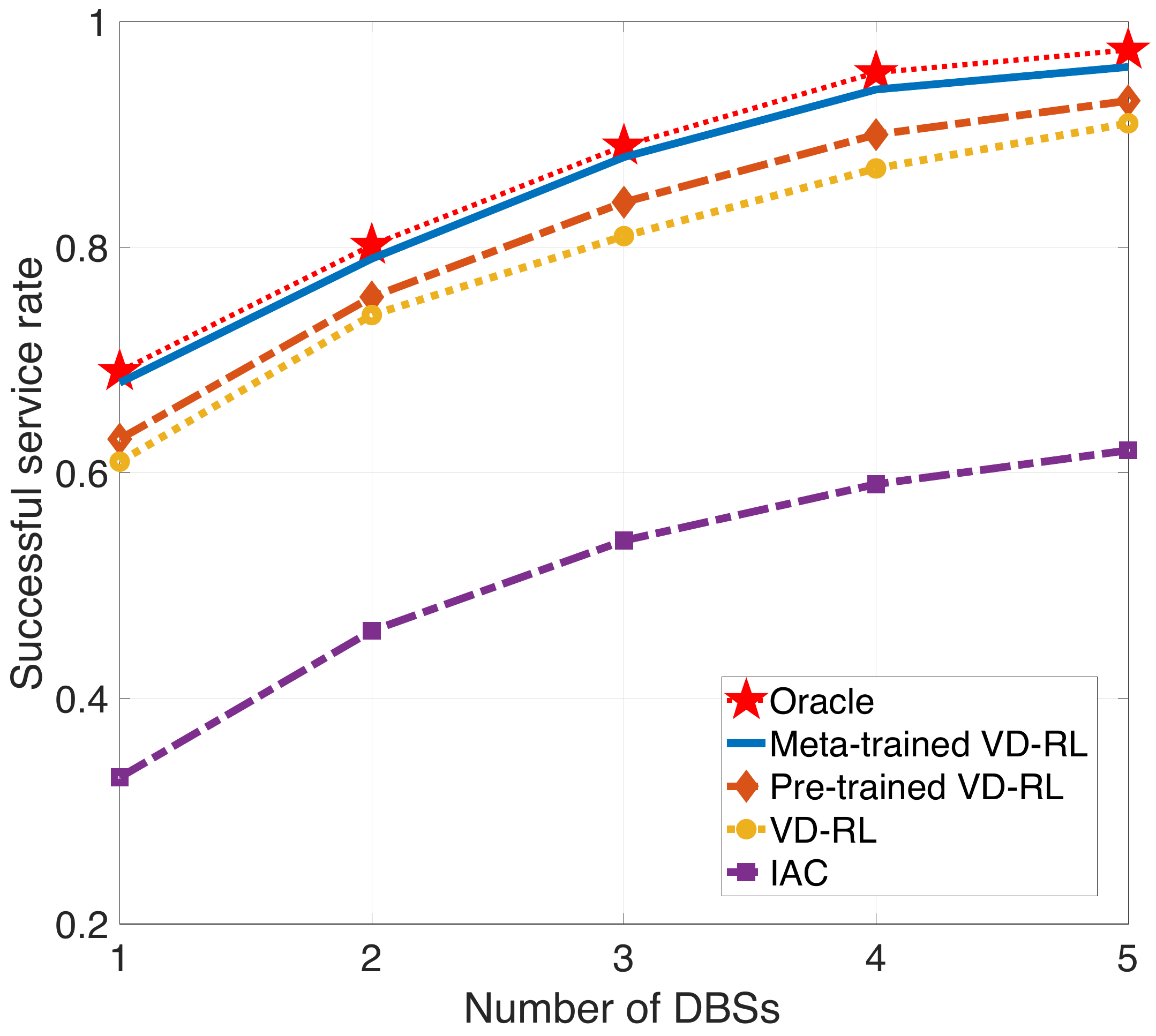}
    \vspace{-0.2cm}
    \caption{\label{meta4} Successful service rate as the number of DBSs varies. }
  \end{center}\vspace{-1cm}
\end{figure}
Fig. \ref{meta4} shows how the successful service rate achieved by the DBSs varies with the number of DBSs. In Fig. \ref{meta4}, we can see that, as the number of DBSs increases, the successful service rate of all the DBSs increases because having more DBSs can lead to a better coverage to the ground users. Fig. \ref{meta4} also shows that the successful service rate increases at a slower speed when the number of deployed DBSs increases. 
This is because that the number of un-covered user request decreases when the number of deployed DBSs increases.
From Fig. \ref{meta4}, we can also see that the meta-trained VD-RL algorithm yields a better coverage to the ground users, in particular, it yields a successful service rate that is $5.6\%$ higher than the one resulting from the VD-RL algorithm, $9.2\%$ higher than the one resulting from the pre-trained VD-RL algorithm, and  $53.2\%$ higher than the one resulting from the IAC algorithm.
This is due to the fact that, by reducing the losses defined in (\ref{eq:closs}) and (\ref{eq:ploss}) across various environments, the proposed meta-training method gets a set of policy and value functions that are close to the optimal policy and value functions at all possible user requests in $p\left(\mathcal{Z}\right)$. This guarantees a faster convergence to a team optimal strategy in an unseen environment. Note that,  although the meta-training method only yields small improvements to the successful service rate, it can converge much faster than the VD-RL algorithm.

\vspace{-0.2cm}
\section{Conclusion}
{
In this paper, we have studied the problem of trajectory design for a group of DBSs in unpredictable, dynamic environments. In the considered system, the DBSs cooperatively fly around the considered environment to provide on demand uplink communication service to ground users. We have formulated the studied problem in an optimization setting and have proposed a VD-RL algorithm to solve this problem. The proposed VD-RL algorithm makes the DBSs independently update their individual strategies toward the maximal team utility of the DBSs, by sharing only its utility and value to other DBSs.
To improve the convergence speed of the VD-RL algorithm in unseen environments, we have also proposed a meta-training method to optimize the initializations in a VD-RL solution.
Simulation results show that the proposed VD-RL algorithm with meta training mechanism outperforms the traditional MARL algorithms. 
}
 \
 \vspace{-0.7cm}
 \section*{Appendix}
{

 \subsection{Proof of Proposition \ref{proposition2}}
 \begin{proof} 
 To prove the convergence of the proposed VD-RL algorithm, we only need to prove that the proposed VD-RL algorithm satisfies the following conditions in \cite{sutton2000policy}:
 {\begin{enumerate}
	\item Value function $V\left(\boldsymbol{S}_k\right)$ converges to a local minimum value of $ {A}^2\left(\boldsymbol{a}_{k}, {\boldsymbol{S}_{k}}\right)$, that is
		   \begin{equation}\label{eq:pf00}
\begin{split}
2\sum^K_{k=1}A\left(\boldsymbol{a}_{k}, {\boldsymbol{S}_{k}}\right)\nabla_{\boldsymbol{\theta}_{c}}A\left(\boldsymbol{a}_{k}, {\boldsymbol{S}_{k}}\right)=0.
\end{split}
\end{equation} 
	\item Policy function parameters are updated as in 
	   \begin{equation}\label{eq:pf0}
\begin{split}
\boldsymbol{\theta}^{\left(i+1\right)}_{a}=\boldsymbol{\theta}_{a}^{\left(i\right)}+\alpha^{\left(i\right)}_a\sum^K_{k=1} A\left(\boldsymbol{a}_{k}, {\boldsymbol{S}_{k}}\right)\nabla_{\boldsymbol{\theta}_{a}}\log \pi_{\boldsymbol{\theta}_{a}}\left(\boldsymbol{a}_{k}\left| {\boldsymbol{S}_{k}} \right.\right).
\end{split}
\end{equation} 
	\item $\mathop {\max }\limits_{\boldsymbol{a}_{n,k}\in{\mathcal{C}\cup\left\{O\right\}},\boldsymbol{s}_{n,k}\in{\mathcal{S}} }\left|\frac{\partial \pi_{\boldsymbol{\theta}_{a}}\left(\boldsymbol{a}_{k}\left| {\boldsymbol{S}_{k}} \right.\right)}{\partial \theta_{i} \partial \theta_{j}}\right|<\infty$, for any elements $\theta_{i}$ and $\theta_{j}$ in the policy function parameter vector $\boldsymbol{\theta}_{a}$.
	\item $\lim_{i \to \infty}\alpha^{\left(i\right)}_c=0$, $\sum^{\infty}_{i=1}\alpha^{\left(i\right)}_c=\infty$, $\lim_{i \to \infty}\alpha^{\left(i\right)}_a=0$, and $\sum^{\infty}_{i=1}\alpha^{\left(i\right)}_a=\infty$. 
\end{enumerate}}
 
Next, we prove that the proposed VD-RL algorithm satisfies condition 1). When value function $\tilde{V}_{\boldsymbol{\theta}_{c,n}}\left(\boldsymbol{s}_{n,k}\right)$ converges to a local minimum, such that
   \begin{equation}\label{eq:pf1}
\begin{split}
&2 A\left(\boldsymbol{a}_{k}, {\boldsymbol{S}_{k}}\right)\nabla_{\boldsymbol{\theta}_{c,n}} \left(\gamma  \tilde{V}_{\boldsymbol{\theta}_{c,n}}\left(\boldsymbol{s}_{n,k+1}\right)-\tilde{V}_{\boldsymbol{\theta}_{c,n}}\left(\boldsymbol{s}_{n,k}\right) \right)\\
&= 2 A\left(\boldsymbol{a}_{k}, {\boldsymbol{S}_{k}}\right)\nabla_{\boldsymbol{\theta}_{c,n}} \tilde {A}_n\left({a}_{n,k}, {\boldsymbol{s}_{n,k}}\right)=0.
\end{split}
\end{equation} 
with the assumption $A\left(\boldsymbol{a}_{k}, {\boldsymbol{S}_{k}}\right)\neq0$, we have $\nabla_{\boldsymbol{\theta}_{c,n}} \tilde {A}_n\left({a}_{n,k}, {\boldsymbol{s}_{n,k}}\right)=0$. That is, we have $2\sum^K_{k=1}\tilde {A}_n\left({a}_{n,k}, {\boldsymbol{s}_{n,k}}\right)\nabla_{\boldsymbol{\theta}_{c,n}}\tilde {A}_n\left({a}_{n,k}, {\boldsymbol{s}_{n,k}}\right)=0$.
 This means that, the VD-RL algorithm satisfies condition 1).
 
 Next, we can see that the VD-RL's distributed update on policy function parameters satisfies condition 2), as each DBS $n$ updates its policy function parameters in the form of $\boldsymbol{\theta}^{\left(i+1\right)}_{a,n}=\boldsymbol{\theta}_{a,n}^{\left(i\right)}+\alpha^{\left(i\right)}_a\sum^K_{k=1}\tilde {A}_n\left({a}_{n,k}, {\boldsymbol{s}_{n,k}}\right)\nabla_{\boldsymbol{\theta}_{a,n}}\log \pi_{\boldsymbol{\theta}_{a,n}}\left(a_{n,k}\left| {\boldsymbol{s}_{n,k}} \right.\right)$. Thus, the distributed update on policy parameters {$\boldsymbol{\theta}_{a,n}$} satisfies condition 2). Condition 3) can be satisfied by properly setting neural network of the policy functions in the proposed VD-RL algorithm (e.g. setting activation function). Meanwhile, condition 4) can be satisfied by adjusting the step sizes of the policy and value functions. In consequence, the proposed algorithm implemented at each DBS $n$ satisfies all conditions from 1) to 4). In other words, each DBS $n$ is guaranteed to converge to an optimal strategy $\boldsymbol{\pi}^*_{n}$ that yields a local maximal team utility $\overline G\left(\boldsymbol{\pi}^*_n, \boldsymbol{\pi}_{-n}\right)$, with DBSs $n'\in\mathcal{N}\setminus n$ following strategies in $\boldsymbol{\pi}_{-n}=\left[\boldsymbol{\pi}_{n}\right]_{n'\in\mathcal{N}\setminus n}$. In summery, by updating policies at each DBS in the system, the proposed VD-RL algorithm solves problem (\ref{opt}) step by step in the form of
 \addtocounter{equation}{0}
\setlength{\belowdisplayskip}{-2 pt}
\begin{equation}\label{opttrans}
 \max_{\boldsymbol{\pi}_1} \max_{\boldsymbol{\pi}_2}\ldots \max_{\boldsymbol{\pi}_N} \sum_{\boldsymbol{\xi}\in\mathcal{E}}G\left(\boldsymbol{\xi}\right)\prod\limits^{K}_{k=1}\pi_1\left(\xi\left| {{\xi}_{1,k}},\tau_{1,k} \right.\right) \prod\limits^{K}_{k=1}\pi_2\left(\xi\left| {{\xi}_{2,k}},\tau_{2,k} \right.\right) \ldots \prod\limits^{K}_{k=1}\pi_N\left(\xi\left| {{\xi}_{N,k}},\tau_{N,k} \right.\right)    
\end{equation}
\vspace{-0.8cm}
\begin{align}\label{c1}
\setlength{\abovedisplayskip}{-5 pt}
\setlength{\belowdisplayskip}{-2 pt}
&\;\;\;\;\rm{s.\;t.}\scalebox{1}{$\;\;\;\;  \sum_{\boldsymbol{\xi}\in\mathcal{E}}\prod\limits^{N}_{n=1}\prod\limits^{K}_{k=1}\pi_n\left(\xi\left| {{\xi}_{n,k}},\tau_{n,k} \right.\right)=1, $} \tag{\theequation a}\\
&\;\;\;\;\;\;\;\;\scalebox{1}{$\;\;\;\;\;\sum_{{a}_{n,k}\in\mathcal{C} \cup \left\{O\right\}}\pi_n\left(\xi\left| {{\xi}_{n,k}},\tau_{n,k} \right.\right)=1, \forall n\in \mathcal{N},  \boldsymbol{\xi}\in \mathcal {E}, k \in \mathcal{K},$} \tag{\theequation b}\\
&\;\;\;\;\;\;\;\;\scalebox{1}{$\;\;\;\;\;0\le \pi_n\left(\xi\left| {{\xi}_{n,k}},\tau_{n,k} \right.\right)\le 1, \forall n\in \mathcal{N}, \boldsymbol{\xi}\in \mathcal {E}, k \in \mathcal{K},$} \tag{\theequation c}
\end{align}
As the DBSs' strategies are independent, the local optimal strategy at each DBS $n$ constructs a local optimal strategy of the non-convex problem (\ref{opt}). 
This completes the proof.

 \end{proof}

 }
\vspace{-0.7cm}

\bibliographystyle{IEEEbib}
\def\baselinestretch{1.35}
\bibliography{references}

\begin{thebibliography}{10}

\bibitem{hu2020meta}
Y.~Hu, M.~Chen, W.~Saad, H.~V. Poor, and S.~Cui,
\newblock ``Meta-reinforcement learning for trajectory design in wireless {UAV}
  networks,''
\newblock in {\em in Proc Global Communications Conference (GLOBECOM)}, Taipei,
  Taiwan, Dec. 2020.

\bibitem{8755300}
M.~{Chen}, U.~{Challita}, W.~{Saad}, C.~{Yin}, and M.~{Debbah},
\newblock ``Artificial neural networks-based machine learning for wireless
  networks: {A} tutorial,''
\newblock {\em IEEE Communications Surveys and Tutorials}, vol. 21, no. 4, pp.
  3039--3071, Fourthquarter 2019.

\bibitem{6214709}
F.~{Jiang} and A.~L. {Swindlehurst},
\newblock ``Optimization of {UAV} heading for the ground-to-air uplink,''
\newblock {\em IEEE Journal on Selected Areas in Communications}, vol. 30, no.
  5, pp. 993--1005, 2012.

\bibitem{8533634}
M.~{Mozaffari}, A.~{Taleb Zadeh Kasgari}, W.~{Saad}, M.~{Bennis}, and
  M.~{Debbah},
\newblock ``Beyond 5{G} with {UAV}s: Foundations of a 3{D} wireless cellular
  network,''
\newblock {\em IEEE Transactions on Wireless Communications}, vol. 18, no. 1,
  pp. 357--372, 2019.

\bibitem{7888557}
Y.~{Zeng} and R.~{Zhang},
\newblock ``Energy-efficient {UAV} communication with trajectory
  optimization,''
\newblock {\em IEEE Transactions on Wireless Communications}, vol. 16, no. 6,
  pp. 3747--3760, June 2017.

\bibitem{8648498}
Y.~{Sun}, D.~{Xu}, D.~W.~K. {Ng}, L.~{Dai}, and R.~{Schober},
\newblock ``Optimal 3{D}-trajectory design and resource allocation for
  solar-powered {UAV} communication systems,''
\newblock {\em IEEE Transactions on Communications}, vol. 67, no. 6, pp.
  4281--4298, Feb. 2019.

\bibitem{8247211}
Q.~{Wu}, Y.~{Zeng}, and R.~{Zhang},
\newblock ``Joint trajectory and communication design for multi-{UAV} enabled
  wireless networks,''
\newblock {\em IEEE Transactions on Wireless Communications}, vol. 17, no. 3,
  pp. 2109--2121, Jan. 2018.

\bibitem{7389838}
Z.~M. {Fadlullah}, D.~{Takaishi}, H.~{Nishiyama}, N.~{Kato}, and R.~{Miura},
\newblock ``A dynamic trajectory control algorithm for improving the
  communication throughput and delay in {UAV}-aided networks,''
\newblock {\em IEEE Network}, vol. 30, no. 1, pp. 100--105, Jan. 2016.

\bibitem{9013759}
M.~{Hua}, A.~L. {Swindlehurst}, C.~{Li}, and L.~{Yang},
\newblock ``{UAV}-aided backscatter networks: Joint {UAV} trajectory and
  protocol design,''
\newblock in {\em in Proc Global Communications Conference (GLOBECOM)},
  Waikoloa, HI, USA, Dec. 2019.

\bibitem{huang2019reinforcement}
Y.~Huang, X.~Mo, J.~Xu, and L.~Qiu,
\newblock ``Reinforcement learning for maneuver design in {UAV}-enabled {NOMA}
  system with segmented channel,''
\newblock {\em arXiv preprint arXiv:1908.03984}, Aug. 2019.

\bibitem{8432464}
C.~H. {Liu}, Z.~{Chen}, J.~{Tang}, J.~{Xu}, and C.~{Piao},
\newblock ``Energy-efficient {UAV} control for effective and fair communication
  coverage: A deep reinforcement learning approach,''
\newblock {\em IEEE Journal on Selected Areas in Communications}, vol. 36, no.
  9, pp. 2059--2070, Aug. 2018.

\bibitem{8654727}
U.~{Challita}, W.~{Saad}, and C.~{Bettstetter},
\newblock ``Interference management for cellular-connected {UAV}s: A deep
  reinforcement learning approach,''
\newblock {\em IEEE Transactions on Wireless Communications}, vol. 18, no. 4,
  pp. 2125--2140, Mar 2019.

\bibitem{9154432}
J.~{Hu}, H.~{Zhang}, L.~{Song}, R.~{Schober}, and H.~V. {Poor},
\newblock ``Cooperative internet of {UAVs}: Distributed trajectory design by
  multi-agent deep reinforcement learning,''
\newblock {\em IEEE Transactions on Communications}, to appear, 2020.

\bibitem{8807386}
J.~{Cui}, Y.~{Liu}, and A.~{Nallanathan},
\newblock ``Multi-agent reinforcement learning-based resource allocation for
  {UAV} networks,''
\newblock {\em IEEE Transactions on Wireless Communications}, vol. 19, no. 2,
  pp. 729--743, Aug. 2020.

\bibitem{8727504}
X.~{Liu}, Y.~{Liu}, Y.~{Chen}, and L.~{Hanzo},
\newblock ``Trajectory design and power control for multi-{UAV} assisted
  wireless networks: A machine learning approach,''
\newblock {\em IEEE Transactions on Vehicular Technology}, vol. 68, no. 8, pp.
  7957--7969, May 2019.

\bibitem{foerster2017counterfactual}
J.~Foerster, G.~Farquhar, T.~Afouras, N.~Nardelli, and S.~Whiteson,
\newblock ``Counterfactual multi-agent policy gradients,''
\newblock {\em arXiv preprint arXiv:1705.08926}, 2017.

\bibitem{rashid2018qmix}
T.~Rashid, M.~Samvelyan, C.~S. De~Witt, G.~Farquhar, J.~Foerster, and
  S.~Whiteson,
\newblock ``{QMIX}: Monotonic value function factorisation for deep multi-agent
  reinforcement learning,''
\newblock {\em arXiv preprint arXiv:1803.11485}, 2018.

\bibitem{houthooft2018evolved}
R.~Houthooft, Y.~Chen, P.~Isola, B.~Stadie, F.~Wolski, O.~J. Ho, and P.~Abbeel,
\newblock ``Evolved policy gradients,''
\newblock in {\em Advances in Neural Information Processing Systems (NIPS)},
  Montreal, Canada, Dec 2018, pp. 5400--5409.

\bibitem{ritter2018been}
S.~Ritter, J.~X Wang, Z.~Kurth-Nelson, S.~M Jayakumar, C.~Blundell, R.~Pascanu,
  and M.~Botvinick,
\newblock ``Been there, done that: Meta-learning with episodic recall,''
\newblock {\em arXiv preprint arXiv:1805.09692}, 2018.

\bibitem{xu2018meta}
Z.~Xu, H.~P van Hasselt, and D.~Silver,
\newblock ``Meta-gradient reinforcement learning,''
\newblock in {\em Proc. of Advances in Neural Information Processing Systems
  (NIPS)}, Montreal, Canada, Dec. 2018.

\bibitem{6251827}
D.~W.~K. {Ng}, E.~S. {Lo}, and R.~{Schober},
\newblock ``Energy-efficient resource allocation in {OFDMA} systems with large
  numbers of base station antennas,''
\newblock {\em IEEE Transactions on Wireless Communications}, vol. 11, no. 9,
  pp. 3292--3304, Jul 2012.

\bibitem{7037248}
A.~{Al-Hourani}, S.~{Kandeepan}, and A.~{Jamalipour},
\newblock ``Modeling air-to-ground path loss for low altitude platforms in
  urban environments,''
\newblock in {\em IEEE Global Communications Conference}, Austin, USA, Dec
  2014, pp. 2898--2904.

\bibitem{watkins1992q}
C.~J. Watkins and P.~Dayan,
\newblock ``Q-learning,''
\newblock {\em Machine learning}, vol. 8, no. 3-4, pp. 279--292, 1992.

\bibitem{chen2019joint}
M.~Chen, Z.~Yang, W.~Saad, C.~Yin, H.~V. Poor, and S.~Cui,
\newblock ``A joint learning and communications framework for federated
  learning over wireless networks,''
\newblock {\em IEEE Transactions on Wireless Communications}, to appear, 2020.

\bibitem{sutton2000policy}
R.~S Sutton, D.~A McAllester, S.~P Singh, and Y.~Mansour,
\newblock ``Policy gradient methods for reinforcement learning with function
  approximation,''
\newblock in {\em Proc. of Advances in Neural Information Processing Systems
  (NIPS)}, Denver, USA, Dec. 2000.

\bibitem{sunehag2017value}
P.~Sunehag, G.~Lever, A.~Gruslys, W.~M. Czarnecki, V.~Zambaldi, M.~Jaderberg,
  M.~Lanctot, N.~Sonnerat, J.~Z Leibo, K.~Tuyls, et~al.,
\newblock ``Value-decomposition networks for cooperative multi-agent
  learning,''
\newblock {\em arXiv preprint arXiv:1706.05296}, 2017.

\bibitem{finn2017model}
C.~Finn, P.~Abbeel, and S.~Levine,
\newblock ``Model-agnostic meta-learning for fast adaptation of deep
  networks,''
\newblock in {\em Proc. of International Conference on Machine Learning
  {(ICML)}}, Sydney, Australia, Aug. 2017.

\bibitem{baird1993advantage}
L.~C. Baird~III,
\newblock ``Advantage updating,''
\newblock Tech. {R}ep., WRIGHT LAB WRIGHT-PATTERSON AFB OH, 1993.

\bibitem{bellman1966dynamic}
R.~Bellman,
\newblock ``Dynamic programming,''
\newblock {\em Science}, vol. 153, no. 3731, pp. 34--37, 1966.

\bibitem{5740583}
G.~E. {Dahl}, D.~{Yu}, L.~{Deng}, and A.~{Acero},
\newblock ``Context-dependent pre-trained deep neural networks for
  large-vocabulary speech recognition,''
\newblock {\em IEEE Transactions on Audio, Speech, and Language Processing},
  vol. 20, no. 1, pp. 30--42, Apr 2012.

\end{thebibliography}
\end{document}